\newtheorem{theorem}{Theorem}[section]
\newtheorem{proposition}[theorem]{Proposition}
\newtheorem{lemma}[theorem]{Lemma}
\newtheorem{corollary}[theorem]{Corollary}
\newtheorem*{subproblem*}{Subproblem}
\newtheorem{definition}[theorem]{Definition} 
\newtheorem{assumption}{Assumption}
\newcommand{\mc}{\mathcal}
\newcommand{\mb}{\mathbf} \newcommand{\mbb}{\mathbb} \newcommand{\wt}{\widetilde}
  \newcommand{\tb}{\textbf}
\newcommand{\tx}{\text}
\newcommand{\Ta}{\Theta}
\newcommand{\R}{\mbb R}
\newcommand{\C}{\mbb C}
\newcommand{\Z}{\mbb Z}
\newcommand{\E}{\mbb E}
\newcommand{\ol}{\overline}
\newcommand{\wh}{\widehat}
\newcommand{\prj}{\tx{Proj}}
\newcommand{\diag}{\tx{diag}}
\newcommand{\lt}{\left}
\newcommand{\rt}{\right}
\newcommand{\eqdef}{=\vcentcolon}
\newcommand{\eps}{\epsilon}
\author{ Qingqing Huang\thanks{MIT. Email: qqh@mit.edu.}
  \and Sham M. Kakade\thanks{University of Washington. Email: sham@cs.washington.edu.}
  \and Weihao Kong\thanks{Stanford University. Email: kweihao@gmail.com.}
  \and Gregory Valiant\thanks{Stanford University. Email: valiant@stanford.edu.} }
\begin{document}

\title{Recovering Structured Probability Matrices}

\maketitle
\thispagestyle{empty}

\begin{abstract}
  We consider the problem of accurately recovering a matrix $\mbb B$ of size $M \times M$, which
  represents a probability distribution over $M^2$ outcomes, given access to an observed matrix of
  ``counts'' generated by taking independent samples from the distribution $\mbb B$.  How can
  structural properties of the underlying matrix $\mbb B$ be leveraged to yield computationally
  efficient and information theoretically optimal reconstruction algorithms?  When can accurate
  reconstruction be accomplished in the sparse data regime? This basic problem lies at the core of a
  number of questions that are currently being considered by different communities, including
  building recommendation systems and collaborative filtering in the sparse data regime, community
  detection in sparse random graphs, learning structured models such as topic models or hidden
  Markov models, and the efforts from the natural language processing community to compute ``word
  embeddings''.  Many aspects of this problem---both in terms of learning and property
  testing/estimation and on both the algorithmic and information theoretic sides---remain open.

  Our results apply to the setting where $\mbb B$ has a low rank  structure.  For this
  setting, we propose an efficient (and practically viable) algorithm that accurately
  recovers the underlying $M \times M$ matrix using {$\Ta(M)$
    samples} (where we assume the rank is a constant).  This linear sample complexity is optimal, up to constant factors, in an extremely
  strong sense: even testing basic properties of the underlying matrix (such as whether it
  has rank 1 or 2) requires $\Omega(M)$ samples.  Additionally,  we provide an even stronger lower bound showing that  distinguishing whether a sequence of observations were drawn from the uniform
  distribution over $M$ observations versus being generated by a well-conditioned Hidden Markov Model with two hidden
  states requires $\Omega(M)$ observations, while our positive results for recovering $\mbb B$ immediately imply that $\Omega(M)$ observations suffice to \emph{learn} such an HMM. This lower bound precludes sublinear-sample hypothesis
  tests for basic properties, such as identity or uniformity, as well as sublinear sample
  estimators for quantities such as the entropy rate of HMMs.

\end{abstract}

\newpage
\setcounter{page}{1}

\section{Introduction}
\label{sec:intro}
Consider an unknown $M \times M$ matrix of probabilities $\mbb B$, satisfying $\sum_{i,j} \mbb
B_{i,j} = 1$.  Suppose one is given $N$ independently drawn $(i,j)$-pairs, sampled according to the
distribution defined by $\mbb B$.  How many draws are necessary to accurately recover $\mbb B$?
What can one infer about the underlying matrix based on these samples?
How can one accurately test whether the underlying matrix possesses certain properties of interest?
How do structural assumptions on $\mbb B$ --- for example, the assumption that $\mbb B$ has low rank
--- affect the information theoretic or computational complexity of these questions?  For the
majority of these tasks, we currently lack both a basic understanding of the computational and
information theoretic lay of the land, as well as algorithms that seem capable of achieving the
information theoretic or computational limits.

This general question of making accurate inferences about a matrix of probabilities, given a matrix
of observed ``counts'' of discrete outcomes, lies at the core of a number of problems that disparate
communities have been tackling independently.
On the theoretical side, these problems include both work on community detection in stochastic block
models (where the goal is to infer the community memberships from an adjacency matrix of a graph
that has been drawn according to an underlying matrix of probabilities expressing the community
structure) as well as the line of work on recovering topic models, hidden Markov models (HMMs), and
richer structured probabilistic models (where the model parameters can often be recovered using
observed count data).
On the practical side, these problems include work on computing low-rank approximations to sparsely
sampled data, which arise in collaborative filtering and recommendation systems, as well as the
recent work from the natural language processing community on understanding matrices of word
co-occurrence counts for the purpose of constructing good ``word embeddings''.  Additionally, work
on latent semantic analysis and non-negative matrix factorization can also be recast in this
setting.

In this work, we focus on this estimation problem where the
probability matrix $\mbb B$ possesses a particular low rank structure.  While this estimation
problem is rather specific, it generalizes the basic community detection problem and the problem of learning various common models encountered in natural language processing such as \emph{probabilistic latent semantic analysis}~\cite{hofmann1999probabilistic}. Additionally, this problem encompasses the main technical challenge behind learning HMMs and topic models, in the sense that after $\mbb B$ is accurately recovered, these learning problems have a number of parameters that is a function only of the  number of topics/hidden states (which bounds the rank of $\mbb B$ and is, in practical applications, at most a few hundred) as opposed to the the dictionary/alphabet size, $M$, which, in natural language settings is typically tens of thousands.
Furthermore, this low rank case also provides a means to study how the relationships between property testing and estimation
problems differ  between this structured setting and the basic rank 1 setting that is
equivalent to simply drawing i.i.d samples from a distribution supported on $M$
elements.

We focus on the estimation of a low rank probability matrix $\mbb B$ in the sparse data regime, near
the information theoretic limit. In many practical scenarios involving sample counts, we seek
algorithms capable of extracting the underlying structure in the sparsely sampled regime.  To give
two motivating examples, consider forming the matrix of word co-occurrences---the matrix whose rows
and columns are indexed by the set of words, and whose $(i,j)$-th element consists of the number of
times the $i$-th word follows the $j$-th word in a large corpus of text.  In this context, the underlying probability matrix, $\mbb B$, represents the distribution of bi-grams encountered in written english.
In the context of recommendation system, one could consider a low rank matrix model, where the
rows are indexed by customers, and the columns are indexed by products, with the $(i,j)$-th entry
corresponding to the number of times the $i$-th customer has purchased the $j$-th product.  Here, the underlying probability matrix, $\mbb B$, models the distribution from which each customer/product purchase is drawn.

In both settings, the structure of the probability matrix underlying these observed counts contains
insights into the two domains, and in both domains we only have relatively sparse data. This is
inherent in many other natural scenarios  involving heavy-tailed distributions (including genomic settings), where despite having massive datasets, a significant fraction of the domain is observed only a single time.

Similar estimation questions have been actively studied in the community detection literature, where
the objective is to accurately recover the communities in the regime where the average degree
(e.g. the row sums of the adjacency matrix) are constant.
In contrast, the recent line of works for recovering highly structured models (such as topic models,
HMMs, etc.) are only applicable to the \emph{over-sampled} regime where the amount of data is well
beyond the information theoretic limits. In these cases, achieving the information theoretic limits
remains a widely open question.
This work begins to bridge the divide between these recent algorithmic advances in both communities.
We hope that the low rank probability matrix setting considered here serves as a jumping-off point
for the more general questions of developing information theoretically optimal algorithms for
estimating structured matrices and tensors in general, or recovering low-rank approximations to
arbitrary probability matrices, in the sparse data regime.
While the general settings are more challenging, we believe that some of our algorithmic techniques
can be fruitfully extended.

%

In addition to developing algorithmic tools which we hope are applicable to a wider class of
problems, a second motivation for considering this particular low rank case is that, with respect to
distribution learning and property testing, the entire lay-of-the-land seems to change completely
when the probability matrix $\mbb B$ has rank larger than 1.
In the rank 1 setting --- where a sample consists of 2 \emph{independent} draws from a distribution
supported on $\{1,\ldots, M\}$ --- the distribution can be learned using $\Theta(M)$ draws.
Nevertheless, many properties of interest can be tested or estimated using a sample size that is
\emph{sublinear} in $M$\footnote{Distinguishing whether a distribution is uniform versus far from
  uniform can be accomplished using only $O(\sqrt{M})$ draws, testing whether two sets of samples
  were drawn from similar distributions can be done with $O(M^{2/3})$ draws, estimating the entropy
  of the distribution to within an additive $\eps$ can be done with $O(\frac{M}{\eps \log M})$
  draws, etc.}.  However, even just in the case where the probability matrix is of rank 2, although
the underlying matrix $\mbb B$ can be represented with $O(M)$ parameters (and, as we show, it can
also be accurately and efficiently recovered with $O(M)$ sample counts), sublinear sample property
testing and estimation is generally impossible.
This result begs a more general question: \emph{what conditions must be true of a structured
  statistical setting in order for property testing to be easier than learning?}

\subsection{Problem Formulation}

We consider the following problem setup and notation:

\begin{itemize}
\item  A vocabulary consisting of $M$ ``words'', denoted by $\mc M=\{1,\dots,M\}$.
\item A low rank probability matrix $\mbb B$, of size $M \times M$, with the following structure:
$  \mbb B = \mbb P \mbb W \mbb P^\top,$ where $\mbb P$ is an $M \times r$ non-negative matrix with column sums 1,  and $\mbb W$ is p.s.d.  with $\sum_{i,j}\mbb W_{i,j} = 1.$  
\item A set of $N$ independent $(i,j)$ pairs drawn according to  $\mbb B$, with the probability of drawing $(i,j)$ given by $\mbb B_{i,j}.$  
\item An $M \times M$ matrix of ``counts'', $C$, summarizing the frequencies of each $(i,j)$ pair in the $N$ draws.
\end{itemize}

Throughout, we will make frequent use of the Poissonization technique whereby we  assume that the number of draws follows a Poisson distribution of expectation $N$.  This renders $C_{i,j}$ independent of the other entries of the count matrix, simplifying analysis.  Additionally, for both upper and lower bounds, with all but inverse exponential probability the $o(N)$ discrepancy between $N$ and $Poi(N)$ contributes only to lower order terms.

\paragraph{Notation}

Throughout the paper, we use the following standard shorthand notations.
Denote $[n]\triangleq \{1,\dots, n\}$.
$\mc I$ denotes a subset of indices in $\mc M$. For a $M$-dimensional vector $x$, we
use vector $x_{\mc I}$ to denote the elements of $x$ restricted to the indices in $\mc
I$; for two index sets $\mc I$, $\mc J$, and a $M\times M$ dimensional matrix $X$, we use
$X_{\mc I\times \mc J}$ to denote the submatrix of $X$ with rows restricting to indices in
$\mc I$ and columns restricting to indices in $\mc J$.

We use $\tx{Poi}(\lambda)$ to denote a Poisson distribution with expectation $\lambda$; we use
$\tx{Ber}(p)$ to denote a Bernoulli random variable with success probability
$p \in [0,1]$; and for a probability vector $x \in [0,1]^M$ satisfying $\sum_i x_i =1$ and an integer $t$, we use $\tx{Mul}( x ; t)$ to denote the multinomial distribution over
$M$ outcomes corresponding to $t$ draws from $[M]$ according to the distribution specified by the vector $x$.

%

\subsection{Main Results}

Our main result is the accurate recovery of a rank $R$ matrix of the form described above in the linear data regime $N = O(M)$:
\begin{theorem}[Upper bound for rank $R$, constant accuracy]
  \label{thm:rank-R1}
  Suppose we have access to $N$ i.i.d.  samples generated according to the a probability matrix
  $\mbb B = \mbb P \mbb W\mbb P^{T}$ with $\mbb P$ an $M \times R$ nonnegative matrix with column sum 1, $\mbb W$ an $R \times R$ p.s.d. matrix with entries summing to 1 and row sums
  bounded by $\sum_{j}\mbb W_{i,j} \ge w_{min}$.
  For any constants $\eps > 0, \delta > 0$ and $N = \Ta(\frac{M R^2}{w_{min}^2\epsilon^5} \log(1/\delta))$, there is an algorithm with $poly(M,\log(1/\delta))$ runtime that returns a rank $R$ matrix $\wh {\mbb B}$ such that with probability at least $1-\delta$:
 $$
    \|\wh {\mbb B}-\mbb B\|_{\ell_1} \le \epsilon.
 $$
\end{theorem}


We emphasize that our recovery is in terms of $\ell_1$ distance, namely the total variation distance between the true distribution and the recovered distribution.   In settings where there is a significant range in the row (or column) sums of $\mbb B$, a spectral error bound might not be meaningful.

Much of the the difficulty in the algorithm is overcoming the fact that the row/column sums of $\mbb B$ might be very non-uniform.  Nevertheless, our result can be compared to the community detection setting with $R$ communities (for which the row/column sums are completely uniform), for which accurate recovery can be efficiently achieved given $N = \Ta(MR^2)$ samples~\cite{chin2015stochastic}.  In our more general setting, we incur an extra factor of $w_{min}^{-1}$, whose removal might be possible with a more careful analysis of our approach.

\subsubsection{Topic Models and Hidden Markov Models}

One of the motivations for considering low rank structure of a probability matrix
$\mbb B$ is that this structure captures the structure of the matrix of expected bigrams
generated by topic models~\cite{papadimitriou1998latent,hofmann1999probabilistic} and HMMs, as described below.


\begin{definition}
  An R-\emph{topic model} over a vocabulary of size $M$ is defined by a set of $R$ distributions, $p^{(1)},\ldots,p^{(R)}$
supported over $M$ words, and a set of $R$ corresponding topic \emph{mixing weights} $w_1,\ldots,w_R$ with $\sum_i w_i = 1$.  The process of drawing a bigram $(i,j)$ consists of first randomly picking a topic $i\in [R]$ according to the distribution defined by the mixing weights, and then drawing two independent words from the distribution $p^{(i)}$ corresponding to the selected topic, $i$.
  Thus the probability of drawing a bigram $(i,j)$ is $\sum_{k=1}^R w_R p^{(k)}(i)p^{(k)}(j)$, and the underlying distribution $\mbb B$ over $(i,j)$ pairs can be expressed as $\mbb B = \mbb P \mbb W \mbb P^\top$ with $\mbb P = [p^{(1)},\ldots,p^{(R)}]$, and $\mbb W=diag(w_1,\ldots,w_R).$
\end{definition}

In the case of topic models, the decomposition of the matrix of bigram probabilities $\mbb B = \mbb P \mbb W \mbb P^\top$ has the desired form required by our Theorem~\ref{thm:rank-R1}, with $\mbb W$ nonnegative and p.s.d., and hence the theorem guarantees an accurate recovery of $\mbb B$, even in the sparse data regime.  The recovery of the mixing weights $\{w_i\}$ and topic distributions $\{p^{(i)}\}$ from $\mbb B$ requires an additional step, which will amount to solving a system of quadratic equations.  Crucially, however, given the rank $R$ matrix $\mbb B$, the remaining problem becomes a problem only involving  $R^2$ parameters---representing a linear combination of the $R$ factors of $\mbb B$ for each $p^{(i)}$---rather than recovering $MR$ parameters.

\begin{definition}
  \label{def:hmm}
  A \emph{Hidden Markov model}  with $R$ hidden states and observations over an alphabet of size $M$ is defined by an $ R \times R$ transition matrix $T$, and $R$ observation distributions $p^{(1)},\ldots,p^{(R)}.$     A sequence of observations is sampled as follows: select an initial state (e.g. according to the stationary distribution of the chain) then evolve the Markov chain according to the transition matrix $T$, drawing an observation from the $i$th distribution $p^{(i)}$ at each timestep in which the underlying chain is in state $i$th.
  
  Assuming the Markov chain has stationary distribution $\pi_1,\ldots,\pi_R$, the probability of seeing a bigram $(i,j)$ with symbol $i$ observed at the $k$th timestep and symbol $j$ observed at the $k+1$st timestep,  tends towards the following (i.e. assuming the chain is close to mixing by timestep $k$) rank $R$ probability matrix $\mbb B =  \mbb P \mbb W \mbb P^\top$, with $\mbb P = [p^{(1)},\ldots,p^{(R)}]$ and $\mbb W = diag(\pi_1,\ldots,\pi_n) T$.
\end{definition}

For HMMs, the low rank matrix of bigrams, $\mbb B= \mbb P \mbb W \mbb P^\top$, does \emph{not} necessarily have the required form---specifically the mixing matrix $\mbb W$ may not be p.s.d.---and it is unclear whether our approach can successfully recover such matrices.   Nevertheless, with slightly more careful analysis, at least in certain cases the techniques yield tight results.  For example, in the setting of an HMM with two hidden states, over an alphabet of size $M$, we can easily show that our techniques obtain an accurate reconstruction of the corresponding probability matrix $\mbb B$, and then leverage that reconstruction together with a constant amount of tri-gram information to accurately learn the HMM:

\begin{proposition}\label{prop:hmm}
(Learning 2-state HMMs)   Consider a sequence of observations given by a Hidden Markov Model with two hidden states and symmetric transition matrix with entries bounded away from 0.  Assuming a constant $\ell_1$ distance between the  distributions of observations corresponding to the two states, there exists an algorithm which,
  given a sampled chain of length $N=\Omega(M/\eps^2)$, runs in time
  $\textrm{poly}(M)$ and returns estimates of the transition matrix and two observation distributions that are accurate in $\ell_1$ distance, with probability at least $2/3$.
\end{proposition}  

This probability of failure can be trivially boosted to $1-\delta$ at the expense of an extra factor of $\log(1/\delta)$ observations.

\subsubsection{Testing vs. Learning}


Theorem~\ref{thm:rank-R1} and Proposition~\ref{prop:hmm} are tight in an extremely strong sense: for both the
topic model and HMM settings, it is information theoretically impossible to perform even
the most basic property tests using fewer than $\Theta(M)$ samples.
For topic models, the community detection lower
bounds~\cite{mossel2014consistency}\cite{krzakala2013spectral}\cite{zhangminimax} imply
that $\Theta(M)$ bigrams are necessary to even distinguish between the case that the
underlying model is the uniform distribution over bigrams versus the case of a
$R$-topic model in which each topic  has a unique
subsets of $M/R$ words with a constant fraction higher probability than the remaining words.
More surprisingly, for $k$-state HMMs with $k \ge 2$, even if we permit an estimator to have more information than merely
bigram counts, namely access to the \emph{full sequence} of observations, we prove the following
linear lower bound.
\begin{theorem}
  \label{thm:testing-hmm}
There exists a constant $c>0$ such that for sufficiently large $M$, given a sequence of observations from a HMM with two states and
emission distributions $p,q$ supported on $M$ elements, even if the
underlying Markov process is symmetric, with transition probability
$1/4$, it is information theoretically impossible to distinguish the
case that the two emission distributions, $p=q=\textrm{Unif}[M]$ from the case
that $||p-q||_1 =1$ with probability greater than $2/3$ using a sequence of fewer than $cM$
observations.
\end{theorem}

This immediately implies the following corollary for estimating the \emph{entropy rate} of an HMM.

\begin{corollary}
There exists an absolute constant $c>0$ such that given a sequence of observations from a HMM with two hidden states and emission distributions supported on $M$ elements, a sequence of $cM$ observations is information theoretically necessary to estimate the entropy rate to within an additive $0.5$ with probability of success greater than $2/3$.
\end{corollary}


These strong lower bounds for property testing and estimation are striking for several
reasons.  First, the core of our learning algorithm for 2-state HMMs (Proposition~\ref{prop:hmm}) is a matrix reconstruction step that
uses only the set of bigram counts.  Conceivably, it might be helpful to consider longer sequences of observations --- even for HMMs that mix in constant time,
there are detectable correlations between observations separated by $O(\log M)$ steps.
Regardless, our lower bound shows that actually no additional information from such longer
$k$-grams can be leveraged to yield sublinear sample property testing or estimation.

A second notable point is the apparent brittleness of sublinear property testing and estimation as
we deviate from the standard (unstructured) i.i.d sampling setting.   Indeed for nearly all distributional property estimation or testing tasks, including testing uniformity and estimating the entropy, sublinear-sample testing and estimation is possible in the i.i.d. sampling setting (e.g.~\cite{goldreich_ron,vv_nips,gpvaliant_power}).  In contrast to the i.i.d. setting in which estimation and testing require asymptotically fewer samples than \emph{learning}, as the above results illustrate, even in the setting of an HMM with just two hidden states, learning and testing require comparable numbers of observations.

\subsection{Related Work}

As mentioned earlier, the general problem of reconstructing an underlying matrix of
probabilities given access to a count matrix drawn according to the corresponding
distribution, lies at the core of questions that are being actively pursued by several
different communities.  We briefly describe these questions, and their relation to the
present work.

\noindent \textbf{Community Detection.}  With the increasing prevalence of large scale
social networks, there has been a flurry of activity from the algorithms and probability
communities to both model structured random graphs, and understand how (and when it is
possible) to examine a graph and infer the underlying structures that might have given
rise to the observed graph.  One of the most well studied community models is the
\emph{stochastic block model}~\cite{holland1983stochastic}.  In its most basic form, this
model is parameterized by a number of individuals, $M$, and two probabilities,
$\alpha,\beta$.  The model posits that the $M$ individuals are divided into two
equal-sized ``communities'', and such a partition defines the following random graph
model: for each pair of individuals in the same community, the edge between them is
present with probability $\alpha$ (independently of all other edges); for a pair of
individuals in different communities, the edge between them is present with probability
$\beta < \alpha$.  Phrased in the notation of our setting, the adjacency matrix of the
graph is generated by including each potential edge $(i,j)$ independently, with
probability $\mbb{B}_{i,j}$, with $\mbb{B}_{i,j} =\alpha$ or $\beta$ according to whether
$i$ and $j$ are in the same community.  Note that $\mbb B$ has rank 2 and is expressible as $\mbb{B} = PWP^\top$ where $P = [p,q]$ for
vectors $p = \frac{2}{M} I_1$ and $q=\frac{2}{M} I_2$ where $I_1$ is the indicator vector
for membership in the first community, and $I_2$ is defined analogously, and $W$ is the $2
\times 2$ matrix with $\alpha \frac{M^2}{4}$ on the diagonal and $\beta \frac{M^2}{4}$ on
the off-diagonal.  

What values of $\alpha,\beta,$ and $M$ enable the community
affiliations of all individuals to be accurately recovered with high
probability?  What values of $\alpha,\beta,$ and $M$ allow for the
graph to be distinguished from an Erdos-Renyi random graph (that has
no community structure)?   The crucial regime is where $\alpha,\beta =
O(\frac{1}{M}),$ and hence each person has a constant, or logarithmic
expected degree. The naive spectral approaches will fail in this
regime, as there will likely be at least one node with degree $\approx
\log M / \log \log M$, which will ruin the top eigenvector.
Nevertheless, in a sequence of works sparked by the paper of Friedman,
and Szemeredi~\cite{friedman1989second}, the following punchline has
emerged: the naive spectral approach will work, even in the constant
expected degree setting, provided one first either removes, or at
least diminishes the weight of these high-degree problem vertices
(e.g.~\cite{feige2005spectral,keshavan2009matrix,mossel2012stochastic,krzakala2013spectral,le2015sparse}).
For both the \emph{exact} recovery problem and the
detection problem, the exact tradeoffs between $\alpha,\beta,$ and $M$
were recently established, down to subconstant
factors~\cite{mossel2014consistency,abbe2014exact,massoulie2014community}.
More recently, there has been further research investigating more
complex stochastic block models, consisting of three or more
components, components of unequal sizes, etc. (see
e.g.~\cite{chin2015stochastic,abbe2015community,abbe2015multi}).

The community detection setting generates an adjacency matrix with entries in $\{0,1\}$, choosing entry $C_{i,j} \leftarrow Bernoulli(\mbb B_{i,j})$, as opposed to our setting where $C_{i,j}$ is drawn from the corresponding Poisson distribution.  Nevertheless, the two models are extremely similar in the sparse regime considered in the community detection literature, since, when $\mbb B_{i,j} = O(1/M),$ the corresponding Poisson and Bernoulli distributions have total variation distance $O(1/M^2).$

\medskip
\noindent \textbf{Word Embeddings.}  On the more applied side, some of the most impactful advances
in natural language processing over the past five years has been work on ``word
embeddings''~\cite{mikolov2013efficient,NIPS2014_5477,stratos_embed, arora_embedding}.  The main
idea is to map every word $w$ to a vector $v_w \in \mathbb{R}^d$ (typically $d \approx 500$) in such
a way that the geometry of the vectors captures the semantics of the word.\footnote{The goal of word
  embeddings is not just to cluster similar words, but to have semantic notions encoded in the
  geometry of the points: the example usually given is that the direction representing the
  difference between the vectors corresponding to ``king'' and ``queen'' should be similar to the
  difference between the vectors corresponding to ``man'' and ``woman'', or ``uncle'' and ``aunt'',
  etc.}  One of the main constructions for such embeddings is to form the $M\times M$ matrix whose
rows/columns are indexed by words, with $(i,j)$-th entry corresponding to the total number of times
the $i$-th and $j$-th word occur next to (or near) each other in a large corpus of text
(e.g. wikipedia).  The word embedding is then computed as the rows of the singular vectors
corresponding to the top rank $d$ approximation to this empirical count matrix.\footnote{A number of
  pre-processing steps have been considered, including taking the element-wise square roots of the
  entries, or logarithms of the entries, prior to computing the SVD.}  These embeddings have proved
to be extremely effective, particularly when used as a way to map text to features that can then be
trained in downstream applications.  Despite their successes, current embeddings seem to suffer from
sampling noise in the count matrix (where many transformations of the count data are employed,
e.g. see \cite{stratos_counts})---this is especially noticeable in the relatively poor quality of
the embeddings for relatively rare words.  The theoretical
work~\cite{DBLP:journals/corr/AroraLLMR15} sheds some light on why current approaches are so
successful, yet the following question largely remains: Is there a more accurate way to recover the
best rank-$d$ approximation of the underlying matrix than simply computing the best rank-$d$
approximation for the (noisy) matrix of empirical counts?

\medskip
\noindent \textbf{Efficient Algorithms for Latent Variable Models.}
There is a growing body of work from the algorithmic side (as opposed to information theoretic) on how to
recover the structure underlying various structured statistical
settings. This body of work includes work on learning HMMs
\cite{hsu2012spectral,MR06,Chang96}, recovering low-rank structure
\cite{arora2012learning,arora2012computing,bhaskara2014smoothed}, and
learning or clustering various structured distributions such as
Gaussian mixture models
\cite{dasgupta1999learning,vempala2004spectral,moitra2010settling,belkin2010polynomial,hsu2013learning,kalai2010efficiently,GeHK15}. A number of these methods essentially can be
phrased as solving an inverse moments problem, and the work
in~\cite{anandkumar2012tensor} provides a unifying viewpoint for
computationally efficient estimation for many of these models under a
tensor decomposition perspective. In general, this body of work has
focused on the computational issues and has considered these
questions in the regime in which the amount of data is plentiful---well above the information theoretic limits.

On the practical side, the natural language processing community has considered a variety of generative and probabilistic models that fall into the framework we consider.  These include  work on \emph{probabilistic latent semantic analysis} (see e.g.~\cite{hofmann1999probabilistic,ding2006nonnegative}), including the popular \emph{latent Dirichlet allocation} topic model~\cite{blei2003latent}.  Much of the algorithmic work on recovering these models is either of a heuristic nature (such as the EM framework), or focuses on computational efficiency in the regime in which data is plentiful (e.g.~\cite{spectralLDA}.

\medskip
\noindent \textbf{Sublinear Sample Testing and Estimation.}
In contrast to the work described in the previous section on efforts to devise computationally efficient algorithms for tackling complex structural settings in the ``over--sampled'' regime, there is also significant work establishing
information theoretically optimal algorithms and (matching) lower bounds for
estimation and distributional hypothesis testing in the most basic
setting of independent samples drawn from (unstructured)
distributions.  This work includes algorithms for estimating basic statistical properties such as entropy
~\cite{paninsky_entropy,guha_entropy,gpvaliant_clt,vv_nips}, support
size~\cite{distinct_element,gpvaliant_clt}, distance between
distributions~ \cite{gpvaliant_clt,vv_nips,gpvaliant_power}, and various hypothesis tests, such as whether two distributions are very similar, versus
significantly different~
\cite{goldreich_ron,batu,chan_valiant,instance_optimal,bhas_val}, etc. While many
of these results are optimal in a worst-case (``minimax'') sense,
there has also been recent
progress on instance optimal (or ``competitive'') estimation and
testing, e.g.
\cite{orlitsky,orlitsky_classification,instance_optimal}, with
stronger information theoretic optimality guarantees. There has also
been a long line of work beginning with~\cite{birge,Batu2004} on these tasks in ``simply structured''
settings, e.g. where the domain of the distribution
has a total ordering or where the distribution is monotonic or unimodal.

\section{Recovery Algorithm}
\vspace{-.3cm} To motivate our algorithms, it will be helpful to first consider the more naive approaches.  Recall that we are given $N$ samples drawn according to the probability matrix $\mathbb B$, with $C$ denoting the matrix of empirical counts.  By the Poisson assumption on sample size, we have that $C_{i,j}\sim \tx{Poi}(N\mbb B_{i,j})$.  Perhaps the most naive hope is to consider the rank $R$ truncated SVD of the empirical matrix $\frac{1}{N}C$, which concentrates to $\mbb B$ in
Frobenius norm at $\frac{1}{\sqrt{N}}$ rate. Unfortunately, in order to achieve constant $\ell_1$ error, this approach would require a sample complexity as large as {$\Ta(M^2)$}. Intuitively, this is because the rows and columns of $C$ corresponding to words with
larger marginal probabilities have higher row and column sums in expectation, as well as higher variances that undermine the spectral concentration of the matrix as a whole.

The above observation leads to the idea of pre-scaling the matrix so that every word (i.e. row/column) 
roughly has equal variance. Indeed, with the pre-scaling modification of the truncated SVD, one can likely improve
the sample complexity of this approach to {$\Ta(M\log M)$}.   To further reduce the sample complexity, it is worth considering what prevents the truncated SVD from achieving accurate recovery in the $N=\Ta(M)$ regime. Suppose the word marginals are roughly uniform, namely all in the order of {$O({1\over M})$}, the linear sample regime roughly corresponds to the stochastic block model setup where the
expected row sums are all of order $d={N\over M}=\Omega(1)$. It is well-known that in this sparse regime, the adjacency matrix (in the graph setting), or the empirical count matrix
$C$ in our problem, does not concentrate to the expectation matrix in the spectral sense. Due to heavy rows/columns of sum $\Omega({\log M\over \log\log M})$, the leading eigenvectors are polluted by the local properties of these heavy rows/columns and do not reveal the global
structure of the matrix/graph, which is precisely the desired information.

Fortunately, these heavy (empirical) rows/columns are the \emph{only} impediment to spectral concentration in the linear sample size regime.  Provided all rows/columns with observed weight significantly more than $d$ are zeroed out, spectral concentration prevails. This simple idea of taming the heavy rows/columns was first
introduced by~\cite{friedman1989second}, and analyzed in~\cite{feige2005spectral}
and many other works. Recently in~\cite{le2015sparse} and~\cite{le2015concentration}, the authors provided clean and clever
proofs to show that \emph{any} manner of ``regularization''---removing entries from the heavy rows/columns until their row/column sums are bounded---essentially leads to the desired spectral
concentration for the adjacency matrix of random graphs whose row/column sums are roughly uniform in
expectation.

The challenge of applying this regularization approach in our more general setting is that the row/column expectations of $C$ might be extremely non-uniform.  If we try to ``regularize'', we will not know whether we are removing entries from rows that have small expected sum but happened to have a few extra entries, or if we are removing entries from a row that actually has a large expected sum (in which case such removal will be detrimental).

Our approach is to partition the vocabulary $\mc M$ into bins that have roughly uniform marginal probabilities, corresponding to partitioning the rows/columns into sets that have roughly equal (empirical) counts.  Restricting our attention to the diagonal sub-blocks  of $\mbb B$ whose rows/columns consist of indices restricted to a single bin,  the expected row and column sums are now roughly uniform.  We can regularize (by removing abnormally heavy rows and columns) from each diagonal block
separately to restore spectral concentration on each of these sub blocks.  Now, we can apply truncated SVD to each diagonal sub block, recovering the column span of these blocks of $\mbb B$. With the 
column spans of each bin, we can now ``stitch'' them together as a single large projection matrix $P$ which has rank at most $R$ times the number of bins, and roughly contains the column span of $\mbb B$.  We then project a new count matrix, $C'$, obtained via a fresh partition of samples.  As the projection is fairly low rank, it filters most of the sampling noise, leaving an accurate approximation of $\mbb B$.

We summarize these basic ideas of  Algorithm~\ref{alg:rank-R} below. 
\begin{enumerate}
\item Given a batch of $N$ samples, group words according to the empirical marginal probabilities, so that in each bin consists of words whose (empirical) marginal probabilities, differ by at most a constant factor.
\item Given a second batch of $N$ samples, zeros out the words that have abnormally large empirical marginal probabilities comparing to the expected marginal probabilities of words in their bin.  Then consider the diagonal blocks of the empirical bigram counts matrix $C$, with rows and columns corresponding to the words in the same bin. We ``regularize'' each diagonal block in the empirical matrix by removing abnormally heavy rows and columns of the blocks, and then apply truncated SVD to estimate the column span of that diagonal block of $\mbb B$.
\item With a third batch of $N$ samples,  project the empirical count matrix into the ``stitched'' column spans recovered in the previous step which yields an accurate estimate of $\tx{Diag}(\rho)^{-1/2} \mbb B \tx{Diag}(\rho)^{-1/2}$ in spectral norm, where $\rho$ denotes the vector of marginal probabilities. Since the estimate is accurate in spectral norm \emph{after} scaling by the marginal probabilities, this spectral concentration of the scaled matrix easily translates into an $\ell_1$ error bounds for the un-scaled matrix $\mbb B$, as desired.
\end{enumerate}

%
%


%

 There are several potential concerns that arise in implementing the above high-level algorithm outline and establishing the correctness of the algorithm:
\begin{enumerate}
\item We do not have access to the exact marginal probabilities of each word. With a linear sample size, the recovered vector of marginal probabilities has only constant (expected) accuracy in $\ell_1$ norm.   Hence each bin, defined in terms of the empirical marginals, includes some non-negligible fraction of words with significantly larger (or smaller) marginal probabilities.  When directly applied to the empirical bins with such ``spillover'' words, the existing results of
  ``regularization'' in \cite{le2015concentration} do not lead to the desired concentration result.
\item When we restrict our analysis to a diagonal block corresponding to a single bin, we throw away all the sample
  counts outside of that block. This greatly reduces the effective sample size, since a significant fraction of a word's marginal probability might be due to co-occurrences with words outside of its bin.  It is not obvious
  that we retain enough samples in each diagonal block to guarantee meaningful estimation.   [If the mixing matrix $\mbb W$ in $\mbb B = \mbb P \mbb W \mbb P^\top$ is not p.s.d., this effect may be sufficiently severe so as to render these diagonal blocks essentially empty, foiling this approach.]
\item Finally, even if the ``regularization'' trick works for each diagonal block, we need to extract the
  useful information and ``stitch'' together this information from each block to provide an
  estimator for the entire matrix, including the off-diagonal blocks. Fortunately, the p.s.d assumption of the mixing matrix $W$ ensures that sufficient information is contained in these diagonal blocks.
\end{enumerate}

\begin{algorithm}[t!] \nonumber
  \caption{ The algorithm to which Theorem~\ref{thm:rank-R1} applies, which recovers rank $R$ probability matrices in the linear data regime.}
  \label{alg:rank-R}
  \DontPrintSemicolon

  \tb{Input:} $3N$ i.i.d. samples from the distribution $\mbb B$ of dimension $M\times M$, where $N=O(\frac{M R^2}{w_{min}^2\epsilon^5})$
  \\
  (In each of the 3 steps, $B$ refers to an independent copy of the normalized count matrix $\frac{1}{N} C$.)
  \\
  \tb{Output:} Rank $R$ estimator $\wh{\mbb B}$ for $\mbb B$
  \\
  \begin{enumerate} [label =Step \arabic*.,labelindent=*]

  \item (\tb{Binning according to  the empirical marginal probabilities})

    Set $ \wh \rho_i = \frac{\sum_{j=1}^M (C_{i,j}+C_{j,i})}{2N}$.
    Partition the vocabulary $\mc M$ into: 
    $${\mc I}_0 = \lt\{i:\wh\rho_i<\frac{1}{N}\rt\}, \tx{ and } 
    {\mc I}_{k} = \lt\{i:\frac{e^{k-1}}{N}\le \wh\rho_i\le \frac{e^k}{N}\rt\}, \tx{  for } k= 1,\ldots, \log N.$$
   Sort the $M$ words according to $\wh \rho_i$ in ascending order. Define $\bar\rho_k = {e^{k+1} \over N}$. For each bin ${\mc I}_k$, if 
$|\mc I_k|< 20 e^{-\frac{3}{2}(k+1)}N$
   set $\bar\rho_k$ to be $0$. Let $k_0=4\log(\frac{c_0 R}{\epsilon \sqrt{w_{min}}})+16$,  for an absolute constant $c_0$ which will be specified in the analysis, and set $\bar \rho_k$ to be $0$ for all $k<k_0$. Define the following block diagonal matrix:
    \begin{align}
      \label{eq:def-Ds}
      D= \lt[
      \begin{array}[c]{cccc}
        \bar\rho_1^{1/2} I_{|\mc I_1|}& &
        \\
         & \ddots &
        \\
         &  &   \bar\rho_{\log N}^{1/2} I_{|\mc I_{\log N}|}
       \end{array}
      \rt].
    \end{align}
    \medskip

  \item (\tb{Estimate dictionary span in each bin})
    
    For each diagonal block $B_k = B_{{\mc I}_k\times {\mc I}_k}$, perform the following two steps:
    \begin{enumerate}
      [labelindent=*,leftmargin=*,rightmargin=\dimexpr\linewidth-15cm]

    \item (\tb{Regularization}): 
    \begin{itemize}
    \item If a row/column of $B$ has sum exceeding $2\bar \rho_k$, set the entire row/column to 0.
    \item
    If a row/column of $B_k$ has sum exceeding $\frac{2|\mc I_k|\bar \rho_k^2}{w_{min}}$, set the entire row/column to 0.
    \end{itemize}
    Denote the regularized block by $\wt B_k$.
    
    \item (\tb{$R$-SVD}): Define the $|{\mc I}_k| \times R$ matrix $V_{k}$ to consist of the $R$ top singular vectors of $\wt B_k$.
    \end{enumerate}

  \item (\tb{Recover estimate for $\wh{\mbb B}$ accurate in $\ell_1$}) \\
  Define the following projection matrix:
    \begin{align}
      \label{eq:def-prj-wh-V}
      P_{V} = \lt[
      \begin{array}[c]{ccc}
        P_{V_1}& &
        \\
        & \ddots  &
        \\
        & & P_{V_{\log M}}
      \end{array}
      \rt], \text{ where } P_{V_k}= V_k V_k^T.
    \end{align} 


    Let $\wh{\mbb B'}$ be the rank-$R$ truncated SVD of matrix $P_{V} D^{-1} B D^{-1} P_{V}$, and return $\wh{\mbb B} = D\wh{\mbb B}' D$.

    \end{enumerate}



  \BlankLine
\end{algorithm}

\subparagraph*{Acknowledgements.}
Sham Kakade acknowledges funding from the Washington Research Foundation for Innovation in Data-intensive Discovery, and the NSF Award CCF-1637360. Gregory Valiant and Sham Kakade acknowledge funding form NSF Award CCF-1703574. Gregory and Weihao's contributions were supported by NSF CAREER Award CCF-1351108, and a Sloan Research Fellowship.

\bibliographystyle{plain}
\bibliography{newBib}
\newpage

\section{Proof of Theorem~\ref{thm:rank-R1}}
\label{sec:rank-r-algorithm}
In this section, we examine each step of Algorithm~\ref{alg:rank-R} to prove
Theorem~\ref{thm:rank-R1}.  Throughout the analysis, we will assume that we have access to three independent batches of samples, each consisting of $Poi(N)$ independent draws from the distribution defined by $\mbb B$.  With all but inverse exponential probability $Poi(N)$ and $N$ deviate by  $o(N)$, and hence this assumption is without loss of generality (as, for example, for each batch we could subsample $Poi(N)$ samples from a set of $2N$).  

Additionally, in this section we prove Theorem~\ref{thm:rank-R1} for a constant probability of failure, $\delta = 1/3.$  To obtain the general result for any $\delta > 0$, the probability of success can trivially be boosted to $1-\delta$ while  increasing the sample complexity by a factor of $O(\log(1/\delta))$.  Specifically, this can be achieved by randomly partitioning the $N$ samples into $O(\log(1/\delta))$ sets, applying the probability of success $\ge 2/3$ result that we prove in this section for target error $\eps/3$ to each set separately to recover $\wh {\mbb B}_1,\ldots,\wh {\mbb B}_{O(\log(1/\delta))},$ and then returning a $\wh {\mbb B}_i$ s.t. a majority of the recovered distributions $\{\wh {\mbb B}_1,\ldots, \wh {\mbb B}_{O(\log(1/\delta))}\}$ have distance at most $2\eps/3$ from the returned $\wh {\mbb B}_i$. Given that each of the returned distributions has distance at most $\eps/3$ from the target distribution, independently with probability at least $2/3$, basic Chernoff bounds and the triangle inequality guarantee that with probability at least $1-\delta$, such a  $\wh {\mbb B}_i$ exists and has distance at most $\eps$ from the target distribution.
%

We will let $C_1, C_2,$ and $C_3$ denote the respective count matrices derived from these three independent batches of $Poi(N)$ samples, corresponding to the three main steps of the algorithm.  Throughout this section, when the context is clear, we drop the subscripts and simply refer to the relevant matrix as $C$.

%

\subsection{Binning}
Let $\wh \rho_i$ denote the empirical marginal probability of the $i$th word (scaled by $N$, rather than the actual sample size of $Poi(N)$): $\frac{\sum_{j=1}^M (C_{i,j}+C_{j,i})}{2N}$.  We partition the vocabulary $\mc M$ according to the marginal probabilities, $\wh \rho$, 
\begin{align*}
  {\mc I}_0 = \lt\{i:\wh \rho_i < {1\over N} \rt\}, \quad{\mc
    I}_{k} = \lt\{i: {e^{k-1}\over N}\le \wh\rho_i< {e^{k}\over N}\rt\}, \tx{ for } k= 1, \ldots,\log N.
\end{align*}

Since we perform this binning based on the empirical probabilities, there will likely be some words whose true probability are significantly greater (or less than) than the average probability of words in their bin.  The words whose true probabilities are too light will not be an issue for us, though we will need to carefully consider the words that are too heavy.  To this end, for each bin, $k$, we define the set of ``spillover words'', ${\mc J}_k \subset \mc I_k$, to be those words whose true marginal probability exceeds the threshold $\bar \rho_k = e^{k+1}/N.$   The remaining words in the $i$th bucket, which we denote by ${{\mc L}_k} = {\mc I}_k \backslash { \mc J}_k$ will be referred to as the ``good words'' whose true marginal probabilities are at most $e^{k+1}/N.$

The following easy proposition argues that the total mass of the ``spillover'' words is small, across all bins.
\begin{proposition}[Spillover mass is small across all bins]\label{prop:small-spillover}
With probability $1-o(1)$, for all empirical bins ${\mc I}_k$, the spillover probability $\sum_{i\in \mc J_k} \rho_i \le e^{-e^{k-2}},$ and the sum of squares of the spillover probabilities, $\sum_{i\in \mc J_k} \rho^2_i \le \frac{e^{-e^{k-2}}}{N}$.
\end{proposition}

\begin{proof}
First we argue that with probability $1-o(1)$, no word with marginal probability $\rho_i \ge \frac{4 \log N}{N}$ will have empirical probability $\wh \rho_i\leq \frac{\rho_i}{e}$; namely, no heavy word will be a ``spillover word''. This follows immediately from standard tail bounds on Poisson random variables (Proposition~\ref{prop:chernoff-poisson}), and a union bound over the $M < N$ words.   


Next, for each bucket, $\mc I_k$, we will show that with probability at least $1-o(1/N)$, the probability mass of its spillover words is bounded by $e^{-e^{k-2}}$. For each bucket $k$, the total spillover mass can be written as: $\sum_{i : \rho_i  \in [\bar \rho_k, \frac{4 \log N}{N}]} \rho_i\tx{Ber}(p_i)$, where $p_i$ is the probability that word $i$ fall into bucket $k$. By Bernstein's inequality:
  \begin{align*}
    \Pr(\sum_{i:\rho_i \in [\bar \rho_k, \frac{4 \log N}{N}]} \rho_i\tx{Ber}(p_i)-\sum_{i:\rho_i \in[\bar \rho_k, \frac{4 \log N}{N}]} \rho_ip_i > t)\le \tx{exp}(-{ t^2\over {\sum_{i:\rho_i \in [\bar \rho_k, \frac{4 \log N}{N}]} \rho_i^2p_i} + \frac{4 \log N}{N} t}).
  \end{align*}
  
Leveraging the tail bound on Poisson random variables (Proposition~\ref{prop:chernoff-poisson}) to bound $p_i$, we have $\rho_i^2p_i\le \frac{1}{{e^{(k-1)e^k}}} {e^{-\rho_i}\rho_i^{e^k+2} }$. Taking the logarithm and computing the derivative over $\rho_i$ yields $-1+\frac{e^k+2}{\rho_i}$. Given $\rho_i\ge \bar{\rho}_k$, the derivative is always negative. Hence the maximum value of the previous upperbound of $\rho_i^2p_i$ is achieved when $\rho_i$ is equal to $\bar{\rho}_k$. Since the total number of $i$, such that $\rho_i\ge \bar{\rho}_k$, is at most $\frac{1}{\bar{\rho}_k}$, the term ${\sum_{i:\rho_i \ge\bar \rho_k} \rho_i^2p_i}$ is upper bounded by $\frac{e^{(2-e)e^{k}+k+1}}{N}$.  The expectation, $\sum_{i:\rho_i\ge\bar \rho_k} \rho_i p_i,$ can also be shown to be bounded by $e^{(2-e)e^k}$. Let $t=e^{-e^{k-2}}$, given $e^k\leq 4\log N$, the ratio $\frac{t^2}{{\sum_{i:\rho_i\ge\bar \rho_k} \rho_i^2\lambda_i} + \frac{4 \log N}{N}t}$ will be $\omega(\log N)$. Hence with probability $o(\frac{1}{N})$, the $k$'th bucket has total spillover mass more than $e^{-e^{k-2}}$. With a union bound over all buckets, we show that with high probability, any bucket $k$ has spillover mass less than $e^{-e^{k-2}}$.

In exact analogy to the above proof of the claimed bound on the sum of the marginal probabilities of the spillover words, we can argue a similar upper bound for the sum of squares of probabilities of the spillover words (i.e. $\sum_{i\in \mc J_k} \rho_i^2$), by setting $t=\frac{e^{-e^{k-2}}}{N}$.

\end{proof}

\subsection{Spectral concentration in diagonal blocks}

Under the assumption that $\mbb W$ is a p.s.d. matrix, we define the ${M\times R}$ matrix $\mbb B^{sqrt}$ to be: $$\mbb B^{sqrt} = \mbb P \mbb W^{1/2}$$
We use $B_k$ as shorthand for $B_{{\mc I}_k\times {\mc I}_k}$ which is the $k$'th diagonal block of the empirical probability matrix, and use $\mbb B_k$ to denote $\mbb B_{\mc I_k \times \mc I_k}$ which is the $k$'th diagonal block of the true probability matrix. Similarly, we define the matrix $B^{sqrt}$ restricted to bin ${\mc I}_k$ as $B_{k}^{sqrt}$ and the matrix  $\mbb B^{sqrt}$ restricted to bin ${\mc I}_k$ as $\mbb B_{k}^{sqrt}$.

In Lemma~\ref{lem:diag-block-concen}, we argue that the regularized block corresponding to each bin concentrates to the corresponding portion of the underlying probability matrix. The main difficulty in showing this comes from the non-uniformity of the entries, caused by the spillover words.  In~\cite{le2015concentration}, a similar type of concentration is established (which we restate as Lemma~\ref{lem:vershynin} in the Appendix), though the quality of the concentration degrades with the ratio of the maximum entry-wise probability to the average probability.  The relatively large probabilities due to the spillover words in our context would yield a result that is a super-constant factor suboptimal.  

In order to deal with the (relatively small amount of) spillover words, we regularize each block matrix twice, once for the words whose overall row/column sum are too big for the bin, and once for the words whose in-block row/column sums are too big. These two types of regularization correspond to the two bulleted steps in part (a) of Step 2 of the algorithm.  We now apply a coupling argument to show that after such regularization, the remaining matrix is just like a regularized ``uniform'' matrix whose spectral concentration property can then be guaranteed in the same spirit as the main theorem in~\cite{le2015concentration}. 

The high level idea is as follows: given the observed count matrix, $C$, we will selectively subsample entries to obtain the matrix $C^R$, which has the property that for all $i,j \in \mc I_k$ the expectation of the $(i,j)$th entries of $C^R$ are at most a constant factor larger than the average entries of the corresponding block.  Specifically, it will be the case that $C^R_{i,j} = O({N \bar \rho_k^2 \over w_{min}})$.  (Recall that $w_{min}$ is defined as $\min_i \sum_j \mbb W_{i,j}$, where $\mbb B = \mbb P \mbb W \mbb P^T.$)

Next, we show that the regularized count matrix $\wt C_{\mc I_k\times \mc I_k}$ computed by the algorithm can be obtained from $C^R$ by zeroing out a small number of rows and columns, which is sufficient to guarantee that $\|\wt C_{\mc I_k\times \mc I_k}- \E C_{\mc I_k \times \mc I_k}^R \| \le O(\sqrt{{N |\mc I_k|\bar \rho_k^2 \over w_{min}}})$ by Lemma~\ref{lem:vershynin}.  The remaining piece of the proof is to show that $\|\E C^R_{\mc I_k \times \mc I_k}  -\E C_{\mc I_k \times \mc I_k}\|\le O(\sqrt{{N |\mc I_k|\bar \rho_k^2 \over w_{min}}})$, which holds because most of the entries of both matrices correspond to ``good'' (not spillover) words, and these entries are identical in the two matrices.

This establishes the spectral concentration of $\wt C_k = N \wt B_k$; finally, in Corollary~\ref{cor:diag-block-concen}, we show that given this concentration, the span of the top $R$ singular vectors of this matrix is close to the column span of the underlying matrix, $\mbb B_k$.

We now formalize the above high-level outline.
\medskip

The following lemma bounds the number of rows and columns of each block that are zeroed out via the regularization step (Step 2) of the algorithm.  This lemma is intuitively clear, and the proof is via a series of tedious Chernoff bounds.

\begin{lemma}\label{lem:num-reg}
With probability $1-o(1)$, for all $k$, less than $\frac{10 w_{min}}{32 \bar \rho_k^2 N}$ rows and column in the $k$-th bucket are zeroed out by Step 2 of the algorithm.
\end{lemma}

\begin{proof}
The columns/rows that are zeroed out can be partitioned into two sets, the ones corresponding to good words and ones corresponding to spillover words.  There are at most $2\frac{\|\rho_{\mc J_k}\|_{1}}{\bar \rho_k}$ rows/columns corresponding to spillover words, as the numerator is the total mass of such words, and the denominator is a lower bound on the mass of each word. In order to show that $2\frac{\|\rho_{\mc J_k}\|_{1}}{\bar \rho_k}\le \frac{5 w_{min}}{32 \bar \rho_k^2 N}$, by Proposition~\ref{prop:small-spillover}, it suffices to show that $\|\rho_{\mc J_k}\|_{1}\le e^{-e^{k-2}}\le \frac{5}{64} w_{min} e^{-k-1}$. 
Taking the logarithm of both sides yields $-e^{k-2}\le \log(5/64)+\log w_{min}-(k+1)$ which is equivalent to $\log(5/64)+\log w_{min}-(k+1)+e^{k-2}\ge 0$. Notice that the left hand side is monotonically increasing when $k>3$, hence we only need to verify the inequality by plugging in the lowerbound of $k$ (i.e. $\log\log (\frac{2}{w_{min}})+6$) which yields 
$$
\log(5/64)+\log w_{min}-(\log\log (\frac{2}{w_{min}})+7)+e^4 \log \frac{2}{w_{min}}
$$
whose non-negativity can be verified by direct calculation. Hence, with the claimed probability, no bin has more than $\frac{5 w_{min}}{32 \bar \rho_k^2 N}$ spillover rows and columns that are zeroed out.

We now consider the rows/columns corresponding to good words.  Specifically, for a good word, we show that the probability that the corresponding row sum of $C_{\mc I_k \times \mc I_k}$ exceeds $2\bar \rho_k N$ is at most $e^{-\frac{\bar \rho_k^2 |\mc I_k| N \log4 }{w_{min}}}$. Since the number of good words is at most $|\mc I_k|$, the probability that the number of zeroed out good word rows is bigger than $\frac{ w_{min}}{32 \bar \rho_k^2 N}$ can be upper bounded by $\Pr(\sum_{i=1}^{|\mc I_k|}X_i\ge  \frac{ w_{min}}{32 \bar \rho_k^2 N})$ where $X_i\sim Bernoulli(e^{-\frac{\bar \rho_k^2 |\mc I_k| N \log4 }{w_{min}}})$. 
Given that the algorithm only keeps bins with $|\mc I_k|\ge 20 e^{-\frac{3}{2}(k+1)}N$, the probability that $X_i=1$ is smaller than $e^{- e^{\frac{1}{2}(k+1)}20 \log 4 \over w_{min}}$. In the case that $k\ge 2\log\log N$, by a union bound we get that with high probability, all $X_i=0$. Otherwise $k <2\log\log N$, we have $\E[\sum_{i=1}^{|\mc I_k|}X_i] \le |\mc I_k|(e^{-\frac{\bar \rho_k^2 |\mc I_k| N \log4 }{w_{min}}})$ which is monotonically decreasing for $|\mc I_k|> \frac{w_{min}}{\bar \rho_k^2 N \log4 } = \frac{w_{min}}{ \log4 }e^{-2(k+1)}N$. Since $20 e^{-\frac{3}{2}(k+1)}N>\frac{w_{min}}{ \log4 }e^{-2(k+1)}N$, we can simply plug in $|\mc I_k| = 20 e^{-\frac{3}{2}(k+1)}N$ to yield a worst case bound: $\E[\sum_{i=1}^{|\mc I_k|}X_i]\le 20 e^{-\frac{3}{2}(k+1)}Ne^{- e^{\frac{1}{2}(k+1)}20 \log 4 \over w_{min}}$. 

We will now bound the probability that $\sum_{i=1}^{|\mc I_k|}X_i$ exceeds its expectation by more than a factor of $\delta = \frac{ w_{min}}{32 \bar \rho_k^2 N}/20 e^{-\frac{3}{2}(k+1)}Ne^{- e^{\frac{1}{2}(k+1)}20 \log 4 \over w_{min}} = \frac{w_{min}}{640}e^{-\frac{1}{2}(k+1)} e^{ e^{\frac{1}{2}(k+1)}20 \log 4 \over w_{min}}>100$ for all $k>k_0$.  Since the $X_i$'s are independent, we may apply a Chernoff bound to yield $\Pr(\sum_{i=1}^{|\mc I_k|}X_i\ge  \frac{ w_{min}}{32 \bar \rho_k^2 N}) \le e^{-\frac{w_{min}N}{100e^{2(k+1)}}}$. Further, applying a union bound for all $k<2\log\log N$, shows that with probability $1-o(1)$, no bin has more than $\frac{ w_{min}}{32 \bar \rho_k^2 N}$ good rows and columns zeroed out. The number of zeroed out rows/columns due to the sum being bigger than $\bar \rho_k N$ in matrix $C$ follows a similar argument. This shows that the total number of zeroed out rows/columns is less than $\frac{10 w_{min}}{32 \bar \rho_k^2 N}$ with the claimed probability.
\end{proof}

\begin{lemma}[Spectral concentration in each diagonal block]
  \label{lem:diag-block-concen}
  With probability $1-|\mc I_k|^{-r}-o(1)$, the regularized matrix $\wt B_{\mc I_k\times \mc I_k}$ from the second step of the algorithm concentrates to the underlying probability matrix $B_{\mc I_k\times \mc I_k}$:
  $$\|\wt B_{\mc I_k\times \mc I_k}-\mbb B_{\mc I_k\times \mc I_k}\| = O\lt(r^{3/2}\sqrt{|\mc I_k| \bar \rho_k^2 \over N w_{min}} \rt).$$
\end{lemma}

\begin{proof}

Recall that $C$ is the original counts matrix, and $\wt C$ is the counts matrix after the two types of regularization. Our proof will hinge on constructing a matrix $C^R$ from $C$ and the marginal vector $\rho$ that has the following two properties:
\begin{itemize}
\item $\wt C_{\mc I_k \times \mc I_K}$ can be obtained from $C^R_{\mc I_k \times \mc I_k}$ by removing few rows and columns.
\item While $C^R$ is a function of both $\rho$ and the random variable $C$, the (marginal) distribution of $C^R$ will have each entry drawn independently from Poisson distributions, where the expectations of the Poisson distributions corresponding to elements of the same diagonal block, $\mc I_k$, are within constant factors of each other.
\end{itemize}

The idea behind this construction is to subsample the rows and columns of $C$ which correspond to spillover words so as to reduce their effective marginals to the level of those of the good words in the same bin. However, we need to make sure that we only subsample the entries that will be zeroed out in $\wt C$, because otherwise $\wt C$ can not be obtained from $C^R$ by zeroing out rows and columns. 
The procedure of sampling $C^C$ is described as following: For a spillover column, we sample the ``target'' column sum from $Poi(\rho_i N)$, and if the sampled column sum is smaller than the bucket threshold $\bar \rho_k N$, we will not modify that column. If, however, the column sum is bigger than the bucket threshold, we redraws the row sum from a designed distribution such that the resulting distribution corresponds to a Poisson distribution with reduced mean. 

Imagine obtaining the counts matrix $C$ along with the vector of true word marginal probabilities, $\rho$. Let $Poi(\lambda,x)=\frac{\lambda^x e^{-\lambda}}{x!}$ denote the p.m.f. of a Poisson distribution with mean $\lambda$ evaluated at integer $x$. For any word $i\in \mc I_k$ for which $\rho_i\ge 4\bar \rho_k$ and the column sum $\sum_{j=1}^M C_{j,i} \ge 2\bar \rho_k N$, we will reduce the entries in the corresponding column of $C$ as follows.  First, with probability $\frac{\max\left(0, Poi\left(\rho_i N,\sum_{j=1}^M C_{j,i}\right)-Poi\left(\bar \rho_k N,\sum_{j=1}^M C_{j,i}\right)\right)}{Poi\left(\rho_i N,\sum_{j=1}^M C_{j,i}\right)}$, we draw a sample $x_i$ from the distribution with p.m.f. $\frac{max\left(0, Poi(\bar \rho_k N,x)-Poi(\rho_i N,x)\right)}{Z}$, where $Z$ is the appropriate  normalization factor, otherwise we set $x_i=\sum_{j=1}^M C_{j,i}$. Note that since $Poi(\bar \rho_k, 2\bar \rho N)> Poi(\rho_i,2\bar \rho N)$, $x_i$ is indeed distributed as $\tx{Poi}(\bar \rho_k N)$ and is always less than or equal to $\sum_{j=1}^M C_{j,i}$ by construction. 

We now reduce the column sum of the $i$th column of $C$ until the sum is $x_i$ by selecting a subset of $x_i$ counts uniformly  at random from the $\sum_{j=1}^M C_{j,i}$ counts to remain.  Let $C^{C}$ be the final result of this operation and denote the set of indices of the columns that were modified by $S_C$. By construction, for any column $i\in \mc I_k$ with $\rho_i \ge 4\bar \rho_k$, the entry $C^{C}_{j,i}$ follows a Poisson distribution $\tx{Poi}(\frac{\bar \rho_k}{\rho_i} \E C_{j,i})$. 

Given the matrix $C^C$ along with the vector $\rho'$ which is the ``row marginal'' of the modified matrix $C^C$, specifically $\rho'_i = \frac{\sum_{j=1}^M \E C^C_{i,j}}{N}$. Notice that for any $i\in \mc I_k$, $\rho_i - \rho'_i$ is fairly small, specifically less than $\frac{\rho_i {\|\rho_{\mc J_k}\|}_1}{w_{min}}$ where ${\|\rho_{\mc J_k}\|}_1$ is the spillover probability of bucket $k$. Use the fact that $k >4\log(\frac{C\sqrt{R}}{\epsilon \sqrt{w_{min}}})+16 > \log \log (\frac{2}{w_{min}})+6$, the spillover probability is less than $e^{-e^{k-2}}\le \frac{w_{min}}{2}$ and hence $(\rho_i - \rho'_i)\le \frac{1}{2}\rho_i$. In analogy to what we did to the columns, we resample spillover rows to make the marginal distribution small. Specifically, for any word $i\in \mc I_k$ that $\rho'_i\ge 4\bar \rho_k$ and the row sum $\sum_{j=1}^M C^C_{i,j} \ge 2\bar \rho_k N$, with probability $\frac{\max(f_{\rho'_i N}(\sum_{j=1}^M C^C_{i,j})-f_{\bar \rho_k N}(\sum_{j=1}^M C^C_{i,j}),0)}{f_{\rho'_i N}(\sum_{j=1}^M C^C_{i,j})}$, we draw a sample $y_i$ from the distribution with p.m.f. $\frac{max(f_{\bar \rho_k N}(x)-f_{\rho'_i N}(x), 0)}{Z}$, where $Z$ is the proper normalization factor, otherwise let $y_i=\sum_{j=1}^M C^C_{i,j}$. $y_i$ is distributed as $\tx{Poi}(\bar \rho_k N)$ and always less than or equal to $\sum_{j=1}^M C^C_{i,j}$ by construction. 
We further remove $\sum_{j=1}^M C^C_{i,j}-y_i$ ones from the $i$th row of matrix $C^C$ randomly in analogy to what we did to the columns. Let $C^{R}$ be the final result of our operation and denote the set of indices of the rows that we modified by $S_R$. By construction, for any row $i\in \mc I_k$ and $\rho'_i >4\bar \rho_k$, entry $j$ of row $i$, $C^{R}_{i,j}$, follows a Poisson distribution $\tx{Poi}(\frac{\bar \rho_k}{\rho'_i} \E C^C_{j,i})$. 

One desired property of random matrix $C^R$ is that the expectation of each entry within $\mc I_k \times \mc I_k$ is pretty well bounded. For any  $i,j\in \mc I_k$, due to the way we construct $C^R$, only if $\rho_i<4\bar\rho_k$ and $\rho_j>8\bar\rho_k$ will we keep the original entry, which guarantees $\E C^R_{i,j}\leq \frac{32 \bar \rho_k^2 N}{w_{min}}$. Denote $\wt C^R$ to be the matrix $C^R$ with columns $S_C$ and rows $S_R$ being zeroed out. 
Given the fact that $S_C$ and $S_R$ contain only the rows/columns whose sum are larger than $2 \bar\rho_k N$, which are zeroed out in $\wt C_{\mc I_k \times \mc I_k}$. Hence $\wt C_{\mc I_k \times \mc I_k}$ can also be thought as the random matrix $C^R$  with rows and columns zeroed out. 
By Lemma~\ref{lem:num-reg}, not too many rows/columns are zeroed out(less than $\frac{10 w_{min}}{32 \bar \rho_k^2 N}$), Lemma~\ref{lem:vershynin} immediately imply that the spectral discrepancy between $\wt C_{\mc I_k \times \mc I_k}$ and $\E C^R_{\mc I_k \times \mc I_k}$ is bounded by $O(r^{3/2}\sqrt{\frac{|\mc I_k| N \bar \rho_k^2}{w_{min}}})$.

The final step is to show $\|\E C^R_{\mc I_k \times \mc I_k} - \E C_{\mc I_k \times \mc I_k}\|$ is also in the order of $O(\sqrt{\frac{|\mc I_k| N \bar \rho_k^2}{w_{min}}})$. On the good words, $\E C^R$ is the same as $\E C$. On the rows and columns involving spillover words, $\E C^R$ is always less than or equal to $\E C$. Write $\E C$ as $N\mbb B$ yields: $\|\E C^R_{\mc I_k \times \mc I_k} - \E C_{\mc I_k \times \mc I_k}\| \le N(2\|\mbb B_{J_k\times L_k}\|+\|\mbb B_{J_k\times J_k}\|)\le N(2\|\mbb B_{J_k\times L_k}\|_{F}+\|\mbb B_{J_k\times J_k}\|_{F})$. The Frobenius norm can be bounded using the fact that the sum of squares of spillover words  marginals is small (Proposition~\ref{prop:small-spillover}): with high probability, $\|\mbb B_{J_k\times L_k}\|_{F}\le \sqrt{\sum_{i\in J_k} \frac{\rho_i^2 \bar\rho_k^2}{w_{min}^2}|L_k|} \le \sqrt{\frac{e^{-e^{k-2}} \bar\rho_k^2}{N w_{min}^2}|L_k|}$, $\|\mbb B_{J_k\times J_k}\|_{F}\le \sqrt{\sum_{i\in J_k, j \in J_k} \frac{\rho_i^2 \rho_j^2}{w_{min}^2}} \le \sqrt{\frac{e^{-2e^{k-2}}}{N^2 w_{min}^2}}$. With the assumption that $k> \log \log (\frac{2}{w_{min}})+3$, $\|\E C^R_{\mc I_k \times \mc I_k} - \E C_{\mc I_k \times \mc I_k}\|=O(\sqrt{\frac{|\mc I_k| N \bar \rho_k^2}{w_{min}}})$.

Finally, we have $\|\wt C_{\mc I_k \times \mc I_k} - \E C_{\mc I_k \times \mc I_k}\| \le \|\wt C_{\mc I_k \times \mc I_k} - \E C^R_{\mc I_k \times \mc I_k}\|+\|\E C^R_{\mc I_k \times \mc I_k} - \E C_{\mc I_k \times \mc I_k}\| = O(r^{3/2}\sqrt{\frac{|\mc I_k| N \bar \rho_k^2}{w_{min}}})$ and hence $\|\wt B_{\mc I_k \times \mc I_k} - \E B_{\mc I_k \times \mc I_k}\|= O(r^{3/2}\sqrt{\frac{|\mc I_k|\bar \rho_k^2}{w_{min}N}})$ as desired.
\end{proof}

\begin{corollary}\label{cor:diag-block-concen}
Let the columns of the $|\mc I_k| \times R$ matrix $V_k$ be the leading $R$ singular vectors of regularized block $\wt B_{\mc I_k\times \mc I_k}$, Define $P_{V_{k}} = V_k V_k^\top$. Then with probability $1-|\mc I_k|^{-r}- o(1)$, we have
  \begin{align}
    \label{eq:bin-concen-sqrt}
    \| P_{ V_{k}} {\mbb B}_{k}^{sqrt} - {\mbb B}_{k}^{sqrt}\| =
    O\lt(r^{3/4}\lt(|\mc I_k| \bar \rho_k^2 \over N w_{min} \rt)^{1/4}\rt).
  \end{align}
\end{corollary}
\begin{proof}
By the triangle inequality, we have $\| P_{V_k} {\mbb B}_{k} P_{V_k} - {\mbb B}_{k}\| \le \| P_{V_k} ( \mbb B_{k} - \wt B_{k}) P_{ V_k}\| + \| P_{V_k} \wt B_{k} P_{ V_k} - \wt B_k\| +\| \wt B_{k}-\mbb B_k\|$. The first term is bounded by $\|\mbb B_{k} - \wt B_{k}\|$ since $P_{V_k}$ is an orthogonal matrix. The second term is bounded by $\|\mbb B_{k} - \wt B_{k}\|$ since $P_{V_k} \wt B_{k} P_{ V_k}$ is the best rank $R$ approximation of $\wt B_k$ and hence must be a better approximation than $\mbb B_k$ which is also rank $R$. Hence $\| P_{V_k} {\mbb B}_{k} P_{V_k} - {\mbb B}_{k}\| \le 3\|\wt B_k - \mbb B_k\| = O(\sqrt{\frac{|\mc I_k|\bar \rho_k^2}{w_{min}N}})$ by Lemma~\ref{lem:diag-block-concen}. Finally, applying Lemma~\ref{claim:sqrt-perturbation} we have $\| P_{V_{k}} {\mbb B}_{k}^{sqrt} - {\mbb B}_{k}^{sqrt}\| \le \sqrt{\|  P_{V_k} {\mbb B}_{k} P_{V_k} - {\mbb B}_{k}\|}=O\lt(\lt(|\mc I_k| \bar \rho_k^2\over N w_{min}\rt)^{1/4}\rt)$.
\end{proof}


\subsection{Low rank projection}
In Step 3 of Algorithm~\ref{alg:rank-R}, we ``stitch'' together the subspaces $\{V_k\}$ recovered in Step 2, to get an estimate for the column span of the entire matrix.

Define the diagonal matrix $D_S$ of dimension $M\times M$ to be:
\begin{align*}
      D= \lt[
      \begin{array}[c]{cccc}
        \sqrt{\bar\rho_1} I_{|\mc I_1|}& &
        \\
         & \ddots &
        \\
         &  &   \sqrt{\bar\rho_{\log N}} I_{|\mc I_{\log N}|}
       \end{array}
      \rt].
\end{align*}
Define $P_{V}$ to be the block diagonal projection matrix which projects an
$M\times M$ matrix to a subspace $V$ of dimension at most $R\log N$:
\begin{align*}
  P_{V} = \lt[
  \begin{array}[c]{ccc}
    P_{V_1}& &
    \\
    & \ddots  &
    \\
    & & P_{V_{\log N}}
  \end{array}
  \rt].
\end{align*}
Now consider the empirical scaled counts, $B=C/N$ derived from the 3rd batch of samples.  The following proposition, Proposition~\ref{prop:noisefilter}, shows that after being projected to the $R\log N$ dimensional subspace, the noise reduces substantially. Of course we also need to argue that the signal is still preserved, which we do in Proposition~\ref{prop:subspace}. Together, these two propositions yield Corollary~\ref{cor:estimator}, which argues that the projection yields a good estimator of the scaled probability matrix when applied on the sample matrix.

\begin{proposition}[Projection Reduces Noise]\label{prop:noisefilter}
With probability larger than $\frac{3}{4}$, $\|P_V D^{-1} (B-\mbb B) D^{-1} P_V\| \le O(\frac{R\log^2(N) }{\sqrt{w_{min}N}})$
\end{proposition}
\begin{proof}
Let us determine the variance of one entry of matrix $D^{-1}(B-\mbb B)D^{-1}$: $\tx{Var}[\frac{B_{i,j}-\mbb B_{i,j}}{\bar \rho_{k(i)} \bar \rho_{k(j)}}] \le \frac{\rho_i \rho_j}{w_{min}\bar \rho_{k(i)} \bar \rho_{k(j)}N}$. The variance term is super constant only in case of spillover and we showed that in Proposition~\ref{prop:small-spillover} that the spill over will not happen for $\rho_i \ge 4\frac{\log(N)}{N}$. Hence $\frac{\rho_i}{\bar \rho_{k(i)}}$ is at most $\log N$ and $\tx{Var}[\frac{B_{i,j}-\mbb B_{i,j}}{\bar \rho_{k(i)} \bar \rho_{k(j)}}] \le \frac{\log^2(N)}{w_{min}N}$. By Proposition~\ref{thm:sub-exponential-facts}, with the claimed probability $\|D^{-1} P_V (B-\mbb B) P_V D^{-1}\| \le O(\frac{R\log^2(N) }{\sqrt{w_{min}N}})$.
\end{proof}

\begin{proposition}[Projection Preserves Signal]\label{prop:subspace}
With probability $1-1/256-o(1)$, $\|P_V D^{-1} \mbb B D^{-1} P_V - D^{-1} \mbb B D^{-1}\| \le O((\frac{1}{e^{k_0} w^2_{min}})^{1/4})$
\end{proposition}
\begin{proof}
\begin{align*}
&\| D^{-1}P_V \mbb B P_V D^{-1} - D^{-1}\mbb BD^{-1} \| \\
=& \|(D^{-1}P_V \mbb B^{sqrt} - D^{-1}\mbb B^{sqrt})\mbb B^{sqrt} P_V D^{-1} + D^{-1}\mbb B^{sqrt}(\mbb B^{sqrt} P_V D^{-1}- \mbb B^{sqrt} D^{-1})\|\\
\le& \|(D^{-1}P_V \mbb B^{sqrt} - D^{-1}\mbb B^{sqrt})\| \|\mbb B^{sqrt} P_V D^{-1}\| + \|D^{-1}\mbb B^{sqrt}\| \|(\mbb B^{sqrt} P_V D^{-1}- \mbb B^{sqrt} D^{-1})\|\\
\le& 2 \|(D^{-1}P_V \mbb B^{sqrt} - D^{-1}\mbb B^{sqrt})\| \|\mbb B^{sqrt} D^{-1}\| 
\end{align*}
We can bound the second term as: $\|\mbb B^{sqrt} D^{-1}\| \le \sqrt{\| D^{-1} \mbb B D^{-1}\|} \le \sqrt{\| D^{-1} \mbb B D^{-1}\|_F} \le ( \sum_{i,j} \frac{\rho^2_i \rho^2_j }{w_{min} \bar \rho_{k(i)} \bar \rho_{k(j)}})^{1/4}$. Denote $\mc J, \mc L$ as the set of all spillover words and good works, respectively. The summation can be partitioned into 3 parts: 
\begin{enumerate}
\item For $i,j$ such that both $i$ and $j$ are good words, the summation is upper bounded by $(\sum_{i\in \mc L, j\in  \mc L}\frac{\rho_i \rho_j}{w_{min}}) \le \frac{1}{w_{min}}$.
\item For $i,j$ such that either $i$ or $j$ is good word, the summation is upper bounded by $\sum_{i,j} \frac{\rho^2_i \rho^2_j }{w_{min} \bar \rho_{k(i)} \bar \rho_{k(j)}} \le \sum_{i\in \mc J, j\in \mc L}\frac{\rho_i^2 \rho_j}{w_{min} \rho_{k(i)}} \le \sum_{i\in \mc J}\frac{\rho_i^2}{w_{min} \bar \rho_{k(i)}}$. $\bar \rho_{k(i)}$ must be at least $\frac{e^{k_0}}{N}$. With the bound for sum of squares of spillover marginals in Proposition~\ref{prop:small-spillover} we have $\sum_{i\in \mc J}\frac{\rho_i^2}{w_{min} \bar \rho_{k(i)}}\le \frac{\sum_{k=k_0}^{\log(N)} e^{-e^{k-2}}}{w_{min}e^{k_0}}$. Applying the assumption that $k_0 \ge \log(\log(2/w_{min}))+3$, we have $\sum_{i\in \mc J, j\in \mc L}\frac{\rho_i^2 \rho^2_j}{w_{min} \rho_{k(i)}\rho_{k(j)}}\le \frac{1}{w_{min}}$
\item For $i,j$ such that both $i$ and $j$ are spillover words, the summation is upper bounded by $$\sum_{i\in \mc J, j\in \mc J}\frac{\rho_i^2 \rho_j^2}{w_{min} \bar \rho_{k(j)}\bar \rho_{k(i)}} \le \frac{(\sum_{k=k_0}^{\log(N)} e^{-e^{k-2}})^2}{w_{min} \bar \rho_{k_0}^2}.$$ With the assumption that $k_0 \ge \log(\log(2/w_{min}))+3$ and $\bar \rho_{k(i)} >\frac{e^{k_0}}{N}$, we have $\sum_{i\in J, j\in J}\frac{\rho_i^2 \rho_j^2}{w_{min} \bar \rho_{k(j)}\bar \rho_{k(i)}}\le \frac{1}{w_{min}}$
\end{enumerate}
Combining the 3 parts yields $\|\mbb B^{sqrt} D^{-1}\|=O(\frac{1}{w_{min}^{1/4}})$.

The first term $\|(D^{-1}P_V \mbb B^{sqrt} - D^{-1}\mbb B^{sqrt})\|$ will be bounded using Corollary~\ref{cor:diag-block-concen}. Matrix $D^{-1}(P_V \mbb B^{sqrt} - \mbb B^{sqrt})$ is concatenated by matrices  $\bar\rho_k^{-1/2} (P_{V_k} \mbb B_k^{sqrt} - \mbb B_k^{sqrt}),k =1, \ldots, \log(N)$ and its spectral norm can be bounded as: $\| D^{-1} (P_{V} \mbb B^{sqrt} - \mbb B^{sqrt}) \| \le (\sum_{k=k_0}^{\log(n)} \bar\rho_k^{-1} \| P_{V_k} {\mbb B}_{k}^{sqrt} - {\mbb B}_{k}^{sqrt} \|^2)^{1/2}=( \frac{1}{\sqrt{w_{min}N}} \sum_{k=k_0}^{\log N} \sqrt{r_k^{3}|\mc I_k|})^{1/2}$ with probability $1-\sum_{k=k_0}^{\log N}|\mc I_k|^{-r_k}-o(1)$. In order to make it hold with large probability, we need to find appropriate $r_k$. Since we require $\mc |I_k|\ge 20e^{-\frac{3}{2}(k+1)}N$, we can simply let $r_k=10$ for all $k<\frac{2}{3}\log N$ which yields $\sum_{k=k_0}^{\frac{2}{3}\log N}|\mc I_k|^{-r_k}\le 1/256$. For the rest $k$, we set $r_k$ to be $\log N$ such that $|\mc I_k|^{-r_k}\le 1/N$(notice that we can assume $|\mc I_k|> k$ since otherwise the error caused by projection will be $0$) and hence $\sum_{\frac{2}{3}\log N}^{\log N} |\mc I_k|^{-r_k}\le \frac{\log N}{N}=o(1)$. Hence with probability at least $1-\frac{1}{256}-o(1)$, we have $\| D^{-1} (P_{V} \mbb B^{sqrt} - \mbb B^{sqrt}) \| \le ( \frac{1}{\sqrt{w_{min}N}} \sum_{k=k_0}^{\log N} \sqrt{r_k^{3}|\mc I_k|})^{1/2}$. Notice that $|\mc I_k| \le \frac{N}{e^k}$, we get $( \frac{1}{\sqrt{w_{min}N}} \sum_{k=k_0}^{\log N} \sqrt{r_k^3|\mc I_k|})^{1/2} = O((\frac{1}{e^{k_0} w_{min}})^{1/4})$.

Putting the bounds of the two terms together yields: $\|(D^{-1}P_V \mbb B^{sqrt} - D^{-1}\mbb B^{sqrt})\| \|\mbb B^{sqrt} D^{-1}\| =O((\frac{1}{e^{k_0} w^2_{min}})^{1/4})$
\end{proof}

\begin{corollary}\label{cor:estimator}
Let $\wh B'$ be the rank $R$ truncated SVD of matrix $P_V D^{-1} B D^{-1} P_V$. With probability larger than $\frac{2}{3}-o(1)$, $\|\wh B' - D^{-1}\mbb B D^{-1}\| =O(\frac{R\log^2(N) }{\sqrt{w_{min}N}} + (\frac{1}{e^{k_0} w^2_{min}})^{1/4})$
\end{corollary}
\begin{proof}
$\|\wh B' - D^{-1}\mbb B D^{-1}\| \le  \|\wh B' - P_V D^{-1} B D^{-1} P_V\|+\|P_V D^{-1} B D^{-1} P_V-D^{-1}\mbb B D^{-1}\|$. Given that $\wh B'$ is the best rank $R$ approximation of matrix $P_V D^{-1} B D^{-1} P_V$ and $D^{-1}\mbb B D^{-1}$ is also rank $R$, 
\begin{align*}
&\|\wh B' - P_V D^{-1} B D^{-1} P_V\|+\|P_V D^{-1} B D^{-1} P_V-D^{-1}\mbb B D^{-1}\| \\
\le& 2 \|P_V D^{-1} B D^{-1} P_V-D^{-1}\mbb B D^{-1}\|\\
\le& 2\|P_V D^{-1} (B-\mbb B) D^{-1} P_V\| + 2\|P_V D^{-1} \mbb B D^{-1} P_V - D^{-1} \mbb B D^{-1}\|.
\end{align*}
At this point with the bounds established in Proposition~\ref{prop:noisefilter} and Proposition~\ref{prop:subspace}, we have $\|\wh B' - D^{-1}\mbb B D^{-1}\|= O(\frac{R\log^2(N) }{\sqrt{w_{min}N}}+(\frac{1}{e^{k_0} w^2_{min}})^{1/4})$.
\end{proof}

\subsection{Completing the Proof of Theorem~\ref{thm:rank-R1}}\label{thm:main_tech}
Having established an accurate estimate of the scaled probability matrix under the operator norm error, our main theorem  can be proved directly with Cauchy-Schwartz, with the additional minor issue of bounding the error due to the rows/columns that were excluded because their marginal probabilities were too small. 

We begin by restating Theorem~\ref{thm:rank-R1} in the case that the desired failure probability $\ge 2/3$.  As noted at the beginning of this section, such a result can trivially be leveraged to yield success probability $1-\delta$ at the expense of increasing the sample size by a factor of $O(\log(1/\delta))$, for any $\delta > 0$.

\begin{theorem}
Let $\wh{B'}$ be the rank $R$ truncated SVD of matrix $P_V D^{-1} B D^{-1} P_V$ and $\wh{\mbb B}=D \wh{B'} D$. With probability at least $2/3-o(1)$:
$$
    \|\wh {\mbb B}-\mbb B\|_{\ell_1} \le \epsilon.
 $$
\end{theorem}
\begin{proof}
Apply Cauchy-Schwartz to the $\ell_1$ norm: 
\begin{align*}
\| \wh{\mbb B}-\mbb B\|_{\ell_1}=&\sum_{i,j: \bar \rho_{k(i)}\ne 0 \tx{ and } \bar \rho_{k(j)}\ne 0} |(\wh{\mbb B}-\mbb B)_{i,j}|\frac{1}{\sqrt{\bar \rho_{k(i)} \bar \rho_{k(j)}}}\sqrt{\bar \rho_{k(i)} \bar \rho_{k(j)}} + 
\sum_{i,j: \bar \rho_{k(i)}= 0 \tx{ or } \bar \rho_{k(j)}=0} |(\wh{\mbb B}-\mbb B)_{i,j}|\\
\le& \sqrt{\sum_{i,j: \bar \rho_{k(i)}\ne 0 \tx{ and } \bar \rho_{k(j)}\ne 0} \frac{(\wh{\mbb B}-\mbb B)_{i,j}^2}{{\bar \rho_{k(i)} \bar \rho_{k(j)}}}}  
\sqrt{\sum_{i,j} \bar \rho_{k(i)} \bar \rho_{k(j)}} 
+ \sum_{i,j: \bar \rho_{k(i)}= 0 \tx{ or } \bar \rho_{k(j)}=0} |\mbb B_{i,j}|,\\
\end{align*} 
where $k(i)$ is the bucket that contains word $i$. The first term, $\sqrt{\sum_{i,j: \bar \rho_{k(i)}\ne 0 \tx{ and } \bar \rho_{k(j)}\ne 0} \frac{(\wh{\mbb B}-\mbb B)_{i,j}^2}{{\bar \rho_{k(i)} \bar \rho_{k(j)}}}}$, is equal to the Frobenius norm of matrix $D^{-1} (\wh{\mbb B}-\mbb B) D^{-1}$ which is bounded by $\sqrt{R}$ times the spectral norm of matrix $\wh B' - D^{-1}\mbb B D^{-1}$. Inside the term $\sqrt{\sum_{i,j}\bar \rho_{k(i)} \bar \rho_{k(j)}}$, each $\bar \rho_{k(i)}$ is at most $e$ times the empirical marginal $\wh{\rho_i}$ and hence $\sum_i \bar \rho_{k(i)}$ is at most $e$, which gives us a $e$ upper bound for $\sqrt{\sum_{i,j}\bar \rho_{k(i)} \bar \rho_{k(j)}}$. 

The next term, $\sum_{i,j: \bar \rho_{k(i)}= 0 \tx{ or } \bar \rho_{k(j)}=0} |\mbb B_{i,j}|,$ is slightly more complicated to bound, and we analyze the two type of reasons for $\bar \rho_{k}$ to be $0$:
\begin{enumerate}
\item The bucket $k$ has less than $20 e^{-\frac{3}{2}(k+1)}N$ words. 
In this case, the true probability mass of this bucket will also be very small, specifically less than the mass of good words plus the mass of spillover words which is bounded by $20e^{-\frac{k+1}{2}} + e^{-e^{k-2}}$. Taking a summation over $k$ from $k_0$ to $\log(N)$ yields a $22e^{-\frac{k_0+1}{2}}$ upper bound.
\item Consider merging all buckets with $k<k_0$ into a single big bucket.  The true probability mass of this bucket will be less than the mass of good words, which is bounded by $\frac{Me^{k_0}}{N}$, plus the mass of spillover words, which is bounded by $e^{-e^{k_0-2}}$ with high probability.
\end{enumerate}
Putting these two parts together, we have $\sum_{i,j: \bar \rho_{k(i)}= 0 \tx{ or } \bar \rho_{k(j)}=0} |\mbb B_{i,j}|\le 22e^{-\frac{k_0+1}{2}} + \frac{Me^{k_0}}{N} + e^{-e^{k_0-2}}$.
Now with the help of Corollary~\ref{cor:estimator}, we have established the error bound: 
$$
\|\wh{\mbb B}-\mbb B\|_{\ell_1}  \le C\sqrt{R} (\frac{R\log^2(N) }{\sqrt{w_{min}N}} + (\frac{1}{e^{k_0} w^2_{min}})^{1/4}) + 22e^{-\frac{k_0+1}{2}} + \frac{Me^{k_0}}{N} + e^{-e^{k_0-2}}.
$$
Let $k_0= 4\log(\frac{C\sqrt{R}}{\epsilon \sqrt{w_{min}}})+16 > \log(\log(\frac{2}{w_{min}}))+3$, which implies that $C\sqrt{R} (\frac{1}{e^{k_0} w^2_{min}})^{1/4}+ 22e^{-\frac{k_0+1}{2}} + e^{-e^{k_0-2}}\le \frac{1}{2}\epsilon$. Further given that $N=\frac{4Me^{k_0}}{\epsilon} = O(\frac{M R^2}{w_{min}^2\epsilon^5})$ it follows that $\frac{CR^2\log^2(N) }{\sqrt{w_{min}N} }+ \frac{Me^{k_0}}{N}\le \frac{1}{2}\epsilon$. Hence $\|\wh{\mbb B}-\mbb B\|_{\ell_1}  \le \epsilon$.
\end{proof}

\appendix

\section{Auxiliary Lemmas}

\begin{lemma}[Wedin's theorem applied to rank-1 matrices]
  \label{lem:wedin-rank-1}
  Denote symmetric matrix $X= vv^\top + E $.  Let $\wh v\wh v^\top$ denote the rank-1 truncated SVD
  of $X$. There is a positive universal constant $C$   such that
    $$
    \min \{\| v - \wh v \|, \| v + \wh v \|\} \le \min\{ C{\|E\|}^{1/2}, C\frac{\|E\|}{\|v\|}\}
     $$
     \end{lemma}
\begin{proof}
The proof follows directly from application of Wedin's theorem (see e.g. Theorem 4 in\cite{vu2011singular}).
\end{proof}

\begin{lemma}
  \label{claim:sqrt-perturbation}
  Let $U$ be a matrix of dimension $M\times R$. Let $P$ be a projection matrix, we have
  \begin{align*}
    \|U- PU\|^2 \le { \|UU^\top - PU(PU)^\top \|}.
  \end{align*}
\end{lemma}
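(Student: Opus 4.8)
The plan is to rewrite both sides in terms of the PSD matrix $A = UU^\top$ and the complementary projection $Q = I - P$, and then exploit the variational characterization $\|S\| \ge |x^\top S x|$ valid for any symmetric matrix $S$ and unit vector $x$. Throughout I treat $P$ as an orthogonal projection, so that $P = P^\top = P^2$ and $Q = I-P$ is again an orthogonal projection.

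First I would handle the left-hand side. Since $U - PU = QU$ and for any matrix $B$ one has $\|B\|^2 = \|BB^\top\|$, I get
\[
\|U - PU\|^2 = \|QU\|^2 = \|QU(QU)^\top\| = \|Q U U^\top Q\| = \|QAQ\|,
\]
using $Q^\top = Q$. Because $QAQ$ is symmetric PSD, its spectral norm equals its top eigenvalue, attained at some unit eigenvector $x$; and since the range of $QAQ$ is contained in the range of $Q$, any eigenvector with nonzero eigenvalue satisfies $Qx = x$, equivalently $Px = 0$. Hence $\|QAQ\| = x^\top QAQ x = (Qx)^\top A (Qx) = x^\top A x$.

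Next I would lower-bound the right-hand side using the same $x$. The matrix $A - PAP$ is symmetric, so $\|A - PAP\| \ge |x^\top (A - PAP) x|$. Evaluating on $x$ with $Px = 0$ kills the second term, since $x^\top PAP x = (Px)^\top A (Px) = 0$, leaving $x^\top(A-PAP)x = x^\top A x \ge 0$ (as $A$ is PSD). Combining the two computations gives
\[
\|A - PAP\| \ge x^\top A x = \|QAQ\| = \|U-PU\|^2,
\]
which is exactly the claim, upon noting that $A - PAP = UU^\top - PU(PU)^\top$.

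There is no real obstacle here beyond spotting the right reduction; the only two points worth stating carefully are (i) that the top eigenvector of $QAQ$ may be taken inside the range of $Q$, so that $Px=0$ and the cross term vanishes, and (ii) that $A-PAP$ is symmetric, so that the single-vector lower bound on its spectral norm applies. Everything else is bookkeeping with the identity $\|B\|^2 = \|BB^\top\|$.
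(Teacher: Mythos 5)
Your proof is correct and follows essentially the same route as the paper's: both decompose $UU^\top - PU(PU)^\top$ into $P^\perp UU^\top P^\perp$ plus cross terms, test against the top eigenvector of $P^\perp UU^\top P^\perp$ (equivalently, the leading left singular vector of $P^\perp U$), and use $Px=0$ to kill the cross terms. Your explicit handling of the degenerate case where the top eigenvalue is zero is a minor added care, not a different argument.
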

\begin{proof}(to Lemma~\ref{claim:sqrt-perturbation} )
  Let $P^\perp = I - P$, so $U-PU = P^\perp U$. We can write
  \begin{align*}
    UU^\top - PU(PU)^\top &= (P+P^\perp)U U^\top (P+P^\perp) - PU(PU)^\top
    \\
    &= P^\perp UU^\top P^\perp + P UU^\top P^\perp + P^\perp UU^\top P.
  \end{align*}
  Let vector $v$ denote the leading left singular vector of $P^\perp U$ and $P^\perp U$, by
  orthogonal projection it must be that $P v = 0$. We can bound
  \begin{align*}
    \| UU^\top - PU(PU)^\top \|&\ge |v^\top ( P^\perp UU^\top P^\perp + P UU^\top P^\perp + P^\perp
    UU^\top P) v |
    \\
    &= |v^\top P^\perp UU^\top P^\perp v|
    \\
    &= \|P^\perp U\|^2.
  \end{align*}

\end{proof}


\begin{proposition}[Scaled noise matrix]
  \label{thm:sub-exponential-facts}
  Consider a noise matrix $E\in R^{M \times M}$ with independent entries, and each entry has zero mean and variance $\sigma^2_{i,j}\leq \sigma^2$. 
  Consider a fixed matrix $V$ of dimension $M\times R$ whose columns are orthonormal, with large probability we can bound the norm of $V^\top E V$ and $V^\top E$ separately by:
  \begin{align*}
    \|V^\top E_S V\| = O( {R \sigma})
  \end{align*}
\end{proposition}

\begin{proof}
  To bound the norm of the projected matrix, note that we have
  \begin{align*}
    \|V^\top E_S V \|_2^2 \le \|V^\top E_S V \|_F^2 = Tr(V^\top E_S VV^\top \wt
    E^\top V).
  \end{align*}
  By Markov inequality, we have
  \begin{align*}
    \Pr(Tr(\wh V^\top E_S VV^\top E_S^\top \wh V) > t )\le {1\over t} \mbb E Tr(\wh
    V^\top E_S VV^\top E_S^\top \wh V) = {1\over t}Tr( \wh V^\top \underbrace{\mbb
      E[E_S VV^\top E_S^\top ] }_X\wh V ) = {1\over t} {R^2  \sigma^2},
  \end{align*}
  where the last equality is because for the $i,j$-th entry of $X$ (let $E_i$ denote the
  $i$-th row of $E$ and $V_{r}$ denote the $r$-th column of $V$)
  \begin{align*}
    X_{i,j} = \mbb E[\sum_r (E_i V_r) (E_j V_r) ] = \delta_{i,j} \sum_{r,k} \sigma_{i,k}^2V_{k,r}^2  \le \delta_{i,j} { R\sigma^2}.
  \end{align*}
  Therefore, with probability at least $1- \delta$, we have
  \begin{align*}
    \|V^\top E_S V \| \le \sqrt{R^2 \sigma^2\over \delta}.
  \end{align*}

\end{proof}

\begin{proposition}[Chernoff Bound for Poisson Random Variables (Theorem 5.4 in~\cite{mitzenmacher2005probability})]
  \label{prop:chernoff-poisson}
  \begin{align*}
    &\Pr(\tx{Poi}(\lambda)\ge x)\le {e^{-\lambda}\lt({e\lambda \over x}\rt)^{x}}, \quad
    \tx{for } x>\lambda,
    \\
    &\Pr(\tx{Poi}(\lambda)\le x)\le {e^{-\lambda}\lt({e\lambda \over x}\rt)^{x}}, \quad
    \tx{for } x<\lambda.
  \end{align*}
\end{proposition}

\begin{lemma}[{Variant of Theorem 2.1 in~\cite{le2015concentration} Adapted to Poisson instead of Bernoulli Random Variables}]
  \label{lem:vershynin}

  Consider a random matrix $A$ of size $M\times M$, where each entry follows an independent Poisson
  distribution $A_{i,j}\sim \tx{Poi}(P_{i,j})$. Define $d_{\max} = M\max_{i,j}P_{i,j}$.
  For any $r\ge 1$, the following holds with probability at least $1-M^{-r}$.
  Consider any subset consisting of at most $10{M\over d_{\max}}$, and {decrease the entries} in  the
  rows and the columns corresponding to the indices in the subset in an arbitrary way.
  Then for some universal constant $C$ the modified matrix $A'$ satisfies:
  \begin{align*}
    \|A' - \mbb E A\| \le C r^{3/2} (\sqrt{d_{\max}} + \sqrt{d'}),
  \end{align*}
  where $d'$ denote the maximal row sum in the modified random matrix.
\end{lemma}
\begin{proof}
  The original proof in \cite{le2015concentration} is for independent Bernoulli entries
  $A_{i,j}\sim\tx{Ber}(P_{i,j})$. However the specific property of the distribution is only used in the proof of Lemma 3.3 and several applications of the Chernoff bound. While the applications of Chernoff bound still hold when the Bernoulli random variables are replaced with Poissons, we will provide the replacement of the Bernstein inequality(i.e. Equation 3.5) as follows:
  
  Recall that a random variable $X$ is sub-exponential if there are non-negative parameters
  $(\sigma, b)$ such that $\mbb E[e^{t(X-\mbb E[X])}]\le e^{t^2\sigma^2/2}$ for all $|t|<{1\over b}$.
  Note that a Poisson variables $X\sim\tx{Poi}(\lambda)$ has sub-exponential tail bound with parameters
  $(\sigma = \sqrt{2\lambda}, b=1)$, since
  \begin{align*}
    \log( \mbb E[e^{t(X-\lambda)} ] e^{-t^2\sigma^2/2 }) = (\lambda(e^t-1) - \lambda t) - \lambda
    t^2 \le 0, \tx{ for } |t|<1.
  \end{align*}
  Notice that both the centered random variable $X-\lambda$ and the flipped random variable $-(X-\lambda)$ are sub-exponential with the same parameters as $X$. Therefore, when the entries are replaced by independent Poisson entries
  $A_{i,j}\sim\tx{Poi}(P_{i,j})$, we can apply Bernstein inequality for sub-exponential random variables to yield a similar concentration bound as Equation 3.5:
  \begin{align*}
    \Pr(|X_{i}| > tm) \le 2 \exp({ -m t^2/2 \over  d_{max}/n + b t })\le 2 \exp({ -m t^2/2 \over  d_{max}/n + t }).
  \end{align*}
  The same arguments of the proof in \cite{le2015concentration} then go through.
\end{proof}

\newcommand{\Var}{\mathbf{Var}}
\def\A{{\bf A}}
\def\a{{\bf a}}
\def\B{{\bf B}}
\def\b{{\bf b}}
\def\C{{\bf C}}
\def\c{{\bf c}}
\def\D{{\bf D}}
\def\d{{\bf d}}
\def\E{{\bf E}}
\def\e{{\bf e}}
\def\f{{\bf f}}
\def\F{{\bf F}}
\def\K{{\bf K}}
\def\k{{\bf k}}
\def\L{{\bf L}}
\def\H{{\bf H}}
\def\G{{\bf G}}
\def\I{{\bf I}}
\def\i{{\bf i}}
\def\j{{\bf j}}
\def\R{{\bf R}}
\def\X{{\bf X}}
\def\Y{{\bf Y}}
\def\P{{\bf P}}
\def\Q{{\bf Q}}
\def\s{{\bf s}}
\def\S{{\bf S}}
\def\T{{\bf T}}
\def\x{{\bf x}}
\def\y{{\bf y}}
\def\z{{\bf z}}
\def\Z{{\bf Z}}
\def\M{{\bf M}}
\def\m{{\bf m}}
\def\n{{\bf n}}
\def\U{{\bf U}}
\def\u{{\bf u}}
\def\V{{\bf V}}
\def\v{{\bf v}}
\def\W{{\bf W}}
\def\w{{\bf w}}
\def\0{{\bf 0}}
\def\1{{\bf 1}}
\def\tX{{\widetilde{\X}}}

\def\AM{{\mathcal A}}
\def\EM{{\mathcal E}}
\def\FM{{\mathcal F}}
\def\TM{{\mathcal T}}
\def\UM{{\mathcal U}}
\def\XM{{\mathcal X}}
\def\YM{{\mathcal Y}}
\def\NM{{\mathcal N}}
\def\OM{{\mathcal O}}
\def\QM{{\mathcal Q}}
\def\IM{{\mathcal I}}
\def\GM{{\mathcal G}}
\def\PM{{\mathcal P}}
\def\LM{{\mathcal L}}
\def\MM{{\mathcal M}}
\def\DM{{\mathcal D}}
\def\SM{{\mathcal S}}
\def\RB{{\mathbb R}}
\def\EB{{\mathbb E}}

\def\ty{\tilde{\bf y}}
\def\tz{\tilde{\bf z}}
\def\hd{\hat{d}}
\def\HD{\hat{\bf D}}
\def\hx{\hat{\bf x}}
\def\hR{\hat{R}}

\def\ph{\mbox{\boldmath$\phi$\unboldmath}}
\def\Pii{\mbox{\boldmath$\Pi$\unboldmath}}
\def\pii{\mbox{\boldmath$\pi$\unboldmath}}
\def\Ph{\mbox{\boldmath$\Phi$\unboldmath}}
\def\Ps{\mbox{\boldmath$\Psi$\unboldmath}}
\def\tha{\mbox{\boldmath$\theta$\unboldmath}}
\def\muu{\mbox{\boldmath$\mu$\unboldmath}}
\def\Si{\mbox{\boldmath$\Sigma$\unboldmath}}
\def\Gam{\mbox{\boldmath$\Gamma$\unboldmath}}
\def\Lam{\mbox{\boldmath$\Lambda$\unboldmath}}
\def\De{\mbox{\boldmath$\Delta$\unboldmath}}
\def\vps{\mbox{\boldmath$\varepsilon$\unboldmath}}
\def\Up{\mbox{\boldmath$\Upsilon$\unboldmath}}
\def\Lap{\mbox{\boldmath$\LM$\unboldmath}}
\newcommand{\ti}[1]{\tilde{#1}}

\def\tr{\mathrm{tr}}
\def\etr{\mathrm{etr}}
\def\etal{{\em et al.\/}\,}
\newcommand{\indep}{{\;\bot\!\!\!\!\!\!\bot\;}}
\def\argmax{\mathop{\rm argmax}}
\def\argmin{\mathop{\rm argmin}}
\def\cov{\text{cov}}
\def\dg{\text{diag}}

\section{Sample Complexity Lowerbound for $2$-State HMM} \label{sec:hmmlb}

We establish the following theorem for \emph{testing} whether a sequence of observations are drawn from a 2 state HMM, versus are i.i.d. sampled from $\mc M$.

\medskip
\noindent \textbf{Theorem~\ref{thm:testing-hmm}.} \emph{There exists a constant $c>0$ such that for sufficiently large $M$, given a sequence of observations from a HMM with two states and
emission distributions $p,q$ supported on $M$ elements, even if the
underlying Markov process is symmetric, with transition probability
$1/4$, it is information theoretically impossible to distinguish the
case that the two emission distributions, $p=q=\textrm{Unif}[M]$ from the case
that $||p-q||_1 =1$ with probability greater than $2/3$ using a sequence of fewer than $cM$
observations.
}
\medskip

We first define the family of two state HMMs to which our lower bound will apply. 
Define a distribution $\mathcal{D}_n$ over 2-state HMMs as follows:  the underlying Markov process is symmetric, with two states ``$+$'' and ``$-$'', with probability of changing state equal to $1/4$.  The distribution of observations given state ``$+$'' is uniform over a uniformly random subset $S_1 \subset \{1,\ldots, n\}$ with $|S_1| = n/2$, and the distribution of observations given state ``$-$'' is the uniform distribution over set $S_2 = \{1,\ldots,n\} \setminus S_1.$

\begin{proposition}\label{thm:distinguish}
No algorithm can distinguish a length $c n$ sequence of observations drawn from a 2-state HMM drawn according to $\mathcal{D}_n,$ from a uniformly random sequence of $cn$ independent draws from $\{1,\ldots,n\}$ with probability of success greater than $\frac{1}{2} + O\left(\sqrt{\sqrt{\frac{2}{2-\frac{4c}{3}}}-1}\right)$.   Specifically, the distribution of random sequences of $cn$ draws from $\{1,\ldots,n\}$ has total variation distance at most $\frac{1}{2}\sqrt{\sqrt{\frac{2}{2-\frac{4c}{3}}}-1} + o_n(1)$ from the distribution of sequences of $cn$ observations drawn from a 2-state HMM drawn according to $\mathcal{D}_n$.
\end{proposition}
To establish the above theorem, it will be convenient to consider a \emph{labelled} sequence of observations, where the label of the $i$th element, $\sigma_i$, corresponds to the hidden state $+$ or $-$.  In some sense, the high level idea of the proof is to argue that given a uniformly random sequence of draws from $\{1,\ldots,n\},$ it is possible to assign labels to the observations, such that the labelled sequence is information theoretically indistinguishable to a labelled sequence generated from the 2-state HMM.  Now we define the joint distribution of observations and labels in our two state HMM.

\begin{definition}
Define $G\in \{1,\ldots,n\}^k$ as a length $k$ sequence. Let $P_n(G,\sigma)$ be the joint distribution of a labelled length $k$ sequence output by a uniformly random two state HMM drawn according $\mathcal{D}_n$, then 
$$
P_n(G,\sigma) = P_n(G|\sigma)P_n(\sigma) =  \frac{\prod_{i=1}^{k-1} (\I\{\sigma_{i}=\sigma_{i+1}\}\frac{3}{2}+ \I\{\sigma_{i}\neq\sigma_{i+1}\}\frac{1}{2})}{n^{k}}\frac{1}{\binom{n}{n/2}}
$$
\end{definition}
The following defines the distribution of the uniform model's observations.
\begin{definition}
Let $P'_n(G)$ be the distribution of a uniformly random sequence of $k$ independent draws from $\{1,\ldots,n\}$, then for any $G$, 
$$
P'_n(G) = \frac{1}{n^{k}}.
$$ 
We now define a distribution over labelings which will allow us to assign a labelling to a uniformly random sequence (corresponding to $P'_n$):
$$
P'_n(\sigma|G)=\frac{P_n(G|\sigma)}{\sum_{\sigma}P_n(G|\sigma)}
$$ 
\end{definition}
\begin{definition}
Define the random variable $Y_n = \frac{P_n(G,\sigma)}{P'_n(G,\sigma)} = \frac{n^k\sum_{\sigma}P_n(G|\sigma)}{\binom{n}{n/2}}$, and observe that $\frac{1}{2}\E_{P'_n}|Y_n-1|=D_{TV}(P_n,P'_n)$. 
\end{definition}

Our proof approach will be to explicitly bound the variance of $Y_n$, which will immediately yield Theorem~\ref{thm:distinguish} via the following trivial lemma:

\begin{lemma}\label{lm:absvariance}
If $\Var[Y_n]\leq \epsilon$, then $\E_{P'_n}|Y_n-1|\leq \sqrt{\epsilon}$
\end{lemma}
\begin{proof}
Let $X = |Y_n-1|$ and note that $\E_{P'_n}[Y_n]=1$.  Hence $\E_{P'_n}[|Y_n - 1|]^2 = \E_{P'_n}[X]^2\leq \E_{P'_n}[X^2]= \Var[Y_n] \le \epsilon.$
\end{proof}

For notational convenience, we will write $\E$ instead of $\E_{P'_n}$ for the remainder of the proof. To bound $\Var[Y_n],$ it will be convenient to have a relatively clean expression for each of the ``cross-terms'' in the variance calculation.  The following lemma establishes such an expression, in terms of the overlap between the  labelings corresponding to the two components of each cross-term. 

\begin{lemma}
Given $\sigma,\pi \in \{+,-\}^n$, assume $|\sigma^+\cap \pi^+|= a$, i.e. there are $a$ symbols that have ``+'' label under both $\sigma$ and $\pi$. Then $E[P_n(G|\sigma)P_n(G|\pi)] = r(\frac{2a}{n}),$ where 
\begin{align*}
r(p) = \frac{2^{-3 k-1}n^{-2k}}{{\sqrt{64 (p-1) p+25}}}\\
 [\sqrt{64 (p-1) p+25} \left(\left(5+\sqrt{64 (p-1) p+25}\right)^{k}+\left(5-\sqrt{64 (p-1) p+25}\right)^{k}\right)\\
   +3
   \left(\left(5+\sqrt{64 (p-1) p+25}\right)^{k}-\left(5-\sqrt{64 (p-1) p+25}\right)^{k}\right)
]
\end{align*}

\end{lemma}
\begin{proof}
Let $G^{(k)}$ denote a sequence of length $k$. Define the following quantity:
\begin{align}
F_{t,++} = \E[P_n(G^{(t)}|\sigma)P_n(G|\pi)|G_t\in(\sigma^+\cap\pi^+)]\\
F_{t,+-} = \E[P_n(G^{(t)}|\sigma)P_n(G|\pi)|G_t\in(\sigma^+\cap\pi^-)]\\
F_{t,-+} = \E[P_n(G^{(t)}|\sigma)P_n(G|\pi)|G_t\in(\sigma^-\cap\pi^+)]\\
F_{t,--} = \E[P_n(G^{(t)}|\sigma)P_n(G|\pi)|G_t\in(\sigma^-\cap\pi^-)]
\end{align}
Let $p = 2a/n$. There is a simple linear recurrence formula: 
$$
\begin{pmatrix}
F_{t+1,++}\\
F_{t+1,+-}\\
F_{t+1,-+}\\
F_{t+1,--}
\end{pmatrix} = \frac{1}{8n^2}\begin{pmatrix}
9p&3(1-p)&3(1-p)&p\\
3p&9(1-p)&(1-p)&3p\\
3p&(1-p)&9(1-p)&3p\\
p&3(1-p)&3(1-p)&9p
\end{pmatrix}
\begin{pmatrix}
F_{t,++}\\
F_{t,+-}\\
F_{t,-+}\\
F_{t,--}
\end{pmatrix}
$$
Finally, 
\begin{align}
E[P_n(G|\sigma)P_n(G|\pi)] = \frac{1}{2}(F_{k,++}p+F_{k,+-}(1-p)+F_{k,-+}(1-p)+F_{k,--}p)\\
= \frac{1}{2^{3k-2}n^{2k}}\begin{pmatrix}
p&1-p&1-p&p
\end{pmatrix}
\begin{pmatrix}
9p&3(1-p)&3(1-p)&p\\
3p&9(1-p)&(1-p)&3p\\
3p&(1-p)&9(1-p)&3p\\
p&3(1-p)&3(1-p)&9p
\end{pmatrix}^{k-1}
\begin{pmatrix}
1\\
1\\
1\\
1
\end{pmatrix}.
\end{align}
Simplifying the above product yields the claimed expression:
\begin{align*}
r(p) = \frac{2^{-3 k-1}n^{-2k}}{{\sqrt{64 (p-1) p+25}}}\\
 [\sqrt{64 (p-1) p+25} \left(\left(5+\sqrt{64 (p-1) p+25}\right)^{k}+\left(5-\sqrt{64 (p-1) p+25}\right)^{k}\right)\\
   +3
   \left(\left(5+\sqrt{64 (p-1) p+25}\right)^{k}-\left(5-\sqrt{64 (p-1) p+25}\right)^{k}\right)
]
\end{align*}
\end{proof}

\begin{proposition}\label{thm:variance}
Let $k=cn$ for some non-negative constant $c<3/2$.

$$
\E[Y_n^2] \leq \sqrt{\frac{2}{2-\frac{4c}{3}}}+o(1)
$$
\end{proposition}
\begin{proof}
$$
\E[Y_n^2] = \frac{n^{2cn}\sum_{\sigma,\pi}P_n(G|\sigma)P_n(G|\pi)}{\binom{n}{n/2}^2} = \frac{n^{2cn}}{\binom{n}{n/2}^2}\sum_{\sigma,\pi}E[P_n(G|\sigma)P_n(G|\pi)]
$$

There are $\binom{n}{n/2}$ different $\pi$, for each of them there exists $\binom{a}{n/2}^2$ different $\sigma$ such that $|\pi^+\cap\sigma^+| = a$. Hence the above formula equals
$$
= \frac{\binom{n}{n/2}\sum_{a=0}^{n/2} \binom{a}{n/2}^2n^{2cn}r(2a/n)}{\binom{n}{n/2}^2} = \frac{\sum_{a=0}^{n/2} \binom{a}{n/2}^2n^{2cn}r(2a/n)}{\binom{n}{n/2}}.
$$
We will use Stiring's approximation to simplify this expression, though we first show that the contribution from the first or last $O(\log n)$ terms, when $a$ or $n/2-a$ is small, is negligible. 
\begin{align}
\frac{\sum_{a=0}^{\log(n)} \binom{n/2}{a}^2r(2a/n)}{\binom{n}{n/2}} = O(\frac{\log(n) (n/2)^{2\log(n)}\frac{5}{4}^{cn}\sqrt{n}}{2^n})=o(1)\label{eq:forst}
\end{align}
It's not hard to verify the first inequality by plugging in $a=\log(n)$ and applying Stirling's approximation on $\binom{n}{n/2}$, and since $\frac{5}{4}^c<2$ we have the last equality.
Applying Stiring's approximation to $\binom{n/2}{a}^2$ for each $\log(n)<a<n/2-\log(n)$ yields
\begin{align}
\frac{\sum_{a=0}^{n/2} \binom{n/2}{a}^2n^{2cn}r(2a/n)}{\binom{n}{n/2}} = \frac{\sum_{a=\log(n)}^{n/2-\log(n)} \binom{n/2}{a}^2r(2a/n)}{\binom{n}{n/2}}+o(1) \\
=\left(\sum_{a=\log(n),\log(n)+1,...n/2-\log(n)}\frac{2^{-(3 c+1) n-\frac{3}{2}} (1-\frac{2a}{n})^{2a-n-1} \frac{2a}{n}^{-2a-1} }{\sqrt{\pi } \sqrt{n} \sqrt{64 (\frac{2a}{n}-1) \frac{2a}{n}+25}} s(\frac{2a}{n})\right)(1+O(\frac{1}{\log n}))+o(1)\\
=\left(\sum_{a=0,1,...n/2}\frac{2^{-(3 c+1) n-\frac{3}{2}} (1-\frac{2a}{n})^{2a-n-1} \frac{2a}{n}^{-2a-1} }{\sqrt{\pi } \sqrt{n} \sqrt{64 (\frac{2a}{n}-1) \frac{2a}{n}+25}} s(\frac{2a}{n})\right)+o(1) \label{eq10}
\end{align}
Where 
\begin{align*}
s(p)=
\sqrt{64 (p-1) p+25} \left(\left(5+\sqrt{64 (p-1) p+25}\right)^{c n}+\left(5-\sqrt{64 (p-1) p+25}\right)^{c n}\right)\\
   +3
   \left(\left(5+\sqrt{64 (p-1) p+25}\right)^{cn}-\left(5-\sqrt{64 (p-1) p+25}\right)^{c n}\right)
\end{align*}
The last equality holds since the first and last $\log(n)$ terms' constribution to the sum is $o(1)$.
Note that we can drop all $5-\sqrt{64 (p-1) p+25}$ terms and only incur $O(\frac{1}{2^n})$ multiplicative error. Since $3$ is always smaller than $\sqrt{64 (p-1) p+25}$, ignoring all $o(1)$ terms yields that the expression in Equation~\ref{eq10} is bounded by the following:
\begin{align}
\leq \sum_{a=0,1,...n/2}\frac{2^{-(3 c+1) n-\frac{3}{2}} (1-\frac{2a}{n})^{2a-n-1} \frac{2a}{n}^{-2a-1}}{\sqrt{\pi } \sqrt{n}} \left(\sqrt{64 (\frac{2a}{n}-1) \frac{2a}{n}+25}+5\right)^{cn}.
\end{align}
Replacing $a$ with $\frac{n}{2}(1/2-\epsilon)$ and grouping the constants together yields
\begin{align}
=\left(\sum_{\epsilon=-\frac{1}{2},-\frac{1}{2}+\frac{2}{n},...\frac{1}{2}}\frac{2^{3/2}}{\sqrt{\pi}\sqrt{n}} \left((\frac{5+\sqrt{9+64\epsilon^2}}{8})^c(1+2\epsilon)^{-\frac{1}{2}-\epsilon}(1-2\epsilon)^{-\frac{1}{2}+\epsilon}\right)^n \right) \label{eq12}
\end{align}
Note that $\frac{5+\sqrt{9+64\epsilon^2}}{8}\leq e^{\frac{4}{3}\epsilon^2}$ and $(1+2\epsilon)^{-\frac{1}{2}-\epsilon}(1-2\epsilon)^{-\frac{1}{2}+\epsilon}\leq e^{-2\epsilon^2}$ when $-1/2<\epsilon<1/2$, hence the summand in Equation~\ref{eq12} is bounded by:
$$
\frac{2^{3/2}}{\sqrt{\pi}\sqrt{n}} \left((\frac{5+\sqrt{9+64\epsilon^2}}{8})^c(1+2\epsilon)^{-\frac{1}{2}-\epsilon}(1-2\epsilon)^{-\frac{1}{2}+\epsilon}\right)^n \leq \frac{2^{3/2}}{\sqrt{\pi}\sqrt{n}} e^{(\frac{4c}{3}-2)\epsilon^2n}.
$$

Letting $n$ go to $\infty$, and converting the sum into integral yields the following asymptotic bound
\begin{align}
\lim_{n\to \infty}\sum_{\epsilon=0,\frac{2}{n},...\frac{1}{2}}\frac{2^{5/2}}{\sqrt{\pi}\sqrt{n}} e^{(\frac{4c}{3}-2)\epsilon^2n} = \lim_{n\to \infty}\frac{2^{5/2}}{\sqrt{\pi}}\sum_{x=0,\frac{1}{\sqrt{n}},...\frac{1}{4}\sqrt{n}} e^{4(\frac{4c}{3}-2)x^2}\frac{1}{\sqrt{n}}\\
 = \frac{2^{5/2}}{\sqrt{\pi}} \int_{x=0}^{\infty} e^{4(\frac{4c}{3}-2)x^2}\\
 = \frac{2^{5/2}}{\sqrt{\pi}} \frac{\sqrt{\pi}}{4\sqrt{2-\frac{4c}{3}}}= \sqrt{\frac{2}{2-\frac{4c}{3}}}
\end{align}
\end{proof}
\begin{proof}[Proof of Theorem~\ref{thm:distinguish}]
By Proposition~\ref{thm:variance} and the fact that $\E[Y_n]=1$, we have $\Var[Y_n]\leq \sqrt{\frac{2}{2-\frac{4c}{3}}}+o(1)$. Hence by Lemma~\ref{lm:absvariance}, the total variation distance $D_{TV}(P_n,P'_n)\leq \frac{1}{2}\sqrt{\sqrt{\frac{2}{2-\frac{4c}{3}}}-1}+o(1)$.
\end{proof}

\section{Proof of Proposition~\ref{prop:hmm} }

In this section we prove Proposition~\ref{prop:hmm}, restated below for convenience:

\medskip
\noindent \textbf{Proposition~\ref{prop:hmm}.} \emph{
(Learning 2-state HMMs)   Consider a sequence of observations given by a Hidden Markov Model with two hidden states and symmetric transition matrix with entries bounded away from 0.  Assuming a constant $\ell_1$ distance between the  distributions of observations corresponding to the two states, there exists an algorithm which,
  given a sampled chain of length $N=\Omega(M/\eps^2)$, runs in time
  $\textrm{poly}(M)$ and returns estimates of the transition matrix and two observation distributions that is accurate in $\ell_1$ distance, with probability at least $2/3$.}
  \medskip

Consider the expected bigram matrix corresponding to the 2 state HMM: $\mbb B = P W P^T$ where $W=\begin{pmatrix} 
1-t & t \\
t & t 
\end{pmatrix}$.  Letting $\rho$ denote the vector of marginal probabilities of each of the $M$ words, and note that $\mbb B - \rho \rho^T$ is a rank 1 symmetric (and p.s.d.) matrix, whose rank 1 factor is a multiple of the vector of differences between the two observation distributions: 
$$
\mbb B - \rho \rho^T= (\frac{1}{2}-t)(p_1-p_2)(p_1-p_2)^T
$$
Observe that when $t>1/2$, $W$ is not psd and hence we can't apply the rank $R$ algorithm directly. But since $(\frac{1}{2}-t)(p_1-p_2)(p_1-p_2)^T$ is either psd or nsd, we can slightly modify the rank $R$ algorithm to estimate $\sqrt{|\frac{1}{2}-t|}(p_1-p_2)$. Specifically, in the Step 2(b), instead of computing singular vectors of $\wt B_k$, we compute the top singular vector of $\wt B_k -\wh \rho_{\mc I_k} \wh \rho_{\mc I_k}^\top$. Then in Step 3, instead of computing SVD of $P_V D^{-1} B D^{-1} P_V$, we use $P_V D^{-1} (B-\wh\rho\wh\rho^\top) D^{-1} P_V$ instead. The result would be an accurate estimate of $(\frac{1}{2}-t)(p_1-p_2)(p_1-p_2)^\top$ in $\ell_1$ distance. This would be sufficient to learn the probability of each word in the two observation distributions, provided we know the transition probability $t$.  

We now argue how to accurately recover $p$.  Define $S$ to be a set of words that all have positive value in $p_1-p_2$. In aggregate, set $S$ has significantly different probability under the two observation distributions---namely differing by a constant.  Note that by assumption on the $\ell_1$ distance between the two distribution, such a set exists. 
 We will require the use of tri-gram statistics, but only those statistics corresponding to an HMM with output alphabet consisting of two ``super words'', with one word corresponding to the entire set $S$, and the other corresponding to the complement of $S$.  Construct and solve a single cubic equation with 1 variable whose coefficients are estimated from tri-gram, to determine the transition probability $t$.  The following lemma shows that the modified Step 2 share the same property as our rank R algorithm.

\begin{lemma}[{Estimating the separation vector restricted to bins}]
  \label{lem:learnDeltak}
Let $\wt B_k$ be the $k$'th block of the regularized matrix in Step 2 of our algorithm. $\wh \rho_{\mc I_k}$ be the estimated marginal restricted to bucket $\mc I_k$. With probability $1-o(1)$, we have
$$
\lt\| (\wt B_k- \wh\rho_{{\mc I}_k}\wh\rho_{{\mc I}_k}^\top)
      -( {\mbb B}_k - \rho_{\mc I_k}\rho_{\mc I_k}^\top)\rt\|  
            = O\lt(\sqrt{|\mc I_k| \bar \rho_k^2 \over N w_{min}}\rt) 
$$
\end{lemma}
\begin{proof}
  Recall the result of Lemma~\ref{lem:diag-block-concen} concerning the concentration of the diagonal
  block after regularization. For the $k$'th empirical bin, with high probability, $\|\wt B_{\mc I_k\times \mc I_k}-\mbb B_{\mc I_k\times \mc I_k}\| = O\lt(\sqrt{|\mc I_k| \bar \rho_k^2 \over N w_{min}} \rt).$ Recall that $\wh \rho_{{\mc I}_k}$ is defined to be the empirical marginal vector $\wh \rho$ restricted to the empirical bin ${\mc I}_k$.
  We can bound 
    $$
      \lt\| (\wt B_k- \wh\rho_{{\mc I}_k}\wh\rho_{{\mc I}_k}^\top)
      -( {\mbb B}_k - \rho_{\mc I_k}\rho_{\mc I_k}^\top)\rt\|  
      \le 
      \lt\|  \wt B_k - {\mbb B}_k \rt\|
      + \lt\| \wh\rho_{{\mc I}_k}\wh\rho_{{\mc I}_k}^\top - \rho_{\mc I_k}\rho_{\mc I_k}^\top\rt\|.
     $$
The second term satisfies the following inequality:$ \lt\| \wh\rho_{{\mc I}_k}\wh\rho_{{\mc I}_k}^\top - \rho_{\mc I_k}\rho_{\mc I_k}^\top\rt\|  \le   \lt\|\wh\rho_{{\mc I}_k} - \rho_{\mc I_k} \rt\| (\lt\|\wh\rho_{{\mc I}_k}\rt\|+\lt\|\rho_{{\mc I}_k}\rt\|)$. In order to bound the second term $ \|\wh\rho_{{\mc I}_k}- \rho_{\mc I_k}\|_2$, we first establish a loose bound of $\|\wh\rho- \rho\|_2$. By Markov's inequality, $P(\sum_{i=1}^M (\wh\rho_i - \rho_i)^2> \frac{\delta}{N})\le \frac{\E[\sum_{i=1}^M (\wh\rho_i - \rho_i)^2]}{\delta/N}= \frac{1}{\delta} $. Hence with large probability, for all $k$, $\|\wh\rho_{{\mc I}_k}- \rho_{\mc I_k}\|_2 \le O(\frac{1}{\sqrt{N}})$.  Further, we have $\|\rho_{{\mc I}_k}\|\le \sqrt{\bar \rho^2_k |\mc I_k|+\frac{e^{-e^{k-2}}}{N}}\le 2\sqrt{\bar \rho^2_k |\mc I_k|}$ and $\|\wh \rho_{{\mc I}_k}\|\le \sqrt{\bar \rho^2_k |\mc I_k|}$. Hence we establish an upper bound for the second term: $ \lt\| \wh\rho_{{\mc I}_k}\wh\rho_{{\mc I}_k}^\top - \rho_{\mc I_k}\rho_{\mc I_k}^\top\rt\| \le O(\sqrt{\frac{\bar \rho^2_k |\mc I_k|}{N}})$. The first term is bounded with direct application of Lemma~\ref{lem:diag-block-concen}. Combining the two parts yields:
$$
\lt\| (\wt B_k- \wh\rho_{{\mc I}_k}\wh\rho_{{\mc I}_k}^\top)
      -( {\mbb B}_k - \rho_{\mc I_k}\rho_{\mc I_k}^\top)\rt\|  
            = O\lt(\sqrt{|\mc I_k| \bar \rho_k^2 \over N w_{min}}\rt). 
$$
\end{proof}
\begin{corollary}\label{cor:2shmm_diag-block-concen}
Let $v_k$ be the leading singular vector of regularized block $\wt B_{\mc I_k\times \mc I_k}-\wh\rho_{{\mc I}_k} \wh\rho_{{\mc I}_k}^\top$, Define $P_{V_{k}} = v_k v_k^\top$. Then with high probability, we have
  \begin{align}
    \label{eq:bin-concen-sqrt}
    \| P_{ V_{k}} \delta_{k} - \delta_{k}\| =
    O\lt(\lt(|\mc I_k| \bar \rho_k^2 \over N w_{min} \rt)^{1/4}\rt).
  \end{align}
\end{corollary}
\begin{proof}
The proof is the same as Corollary~\ref{cor:diag-block-concen}.
\end{proof}

\begin{proposition}[Noise Filter]\label{prop:2shmm_noisefilter}
With probability larger than $\frac{3}{4}$, $\|P_V D^{-1} ((B-\wh \rho \wh\rho^\top)- (\mbb B - \rho \rho^\top)) D^{-1} P_V\| \le O(\frac{\log^2(N) }{\sqrt{w_{min}N}}+\sqrt{\frac{M}{N}}+e^{\frac{1}{2}(-e^{k_0-2}-k_0)})$
\end{proposition}
\begin{proof}
The inequality that $\|P_V D^{-1} ((B-\wh \rho \wh\rho^\top)- (\mbb B - \wh \rho \wh \rho^\top)) D^{-1} P_V\| \le O(\frac{\log^2(N) }{\sqrt{w_{min}N}})$ is exactly what Corollary~\ref{prop:noisefilter} shows. Hence we only need to upper bound  $\|P_V D^{-1} ( \rho \rho^\top - \wh \rho \wh \rho^\top) D^{-1} P_V\|$ which is equivalent to $\|D^{-1} ( \rho \rho^\top - \wh \rho \wh \rho^\top) D^{-1}\|$ because $P_V$ is an orthogonal matrix. With the following inequality: $ \lt\| D^{-1}\wh\rho\wh\rho^\top D^{-1} - D^{-1} \rho \rho^\top D^{-1}\rt\|  \le   \lt \| D^{-1}(\wh\rho - \rho )\rt\| (\lt \|D^{-1}\wh\rho\rt\|+\lt\| D^{-1}\rho\rt\|)$, we are going to bound the three terms one by one: $\lt \|D^{-1}\wh\rho\rt\|$, $\lt\| D^{-1}\rho\rt\|$, $\lt \| D^{-1}(\wh\rho - \rho )\rt\|$.  
\begin{enumerate}
\item Expanding $\lt \|D^{-1}\wh\rho\rt\|$ and applying the crude upper-bound $\|\wh \rho_{{\mc I}_k}\|\le \sqrt{\bar \rho^2_k |\mc I_k|}$, we have $\lt \|D^{-1}\wh\rho\rt\|=\sqrt{\sum_k \frac{\|\wh \rho_{\mc I_k}\|^2}{\bar\rho_k}} \le \sqrt{\sum_k \bar\rho_k |\mc I_k|}\le e$.  
\item Applying similar inequality, $\|\rho_{{\mc I}_k}\|\le \sqrt{\bar \rho^2_k |\mc I_k|+\frac{e^{-e^{k-2}}}{N}}\le 2\sqrt{\bar \rho^2_k |\mc I_k|}$, with the same argument, similar bound holds: $\lt \|D^{-1}\rho\rt\|\le 2e$. 
\item The last term $\lt \| D^{-1}(\wh\rho - \rho )\rt\|$ is slightly trickier. Notice that if we replace $D^{-1}$ by matrix $\diag({\rho})^{-1/2}$ whose $i$'th diagonal entry is $\frac{1}{\sqrt{\rho_i}}$, $e \lt \| \diag(\rho)^{-1/2}(\wh\rho - \rho )\rt\|$ would be a good upper bound if we ignore the spillover words. By Markov's inequality, $P(\sum_{i=1}^M \frac{(\wh\rho_i - \rho_i)^2}{\rho_i}> \delta)\le \frac{\E[\sum_{i=1}^M \frac{(\wh\rho_i - \rho_i)^2}{\rho_i}]}{\delta}= \frac{1}{\delta} \frac{M}{N}$. Hence with large probability, $\lt \| \diag(\rho)^{-1/2}(\wh\rho - \rho )\rt\|_2^2=O(\frac{M}{N})$. Now we need  to incorporate the spillover words. For the $k$th bucket, the total contribution of the spillover words to the term $\lt \| D^{-1}(\wh\rho - \rho )\rt\|_2^2$ is exactly  $\frac{\sum_{i\in \mc J_k} (\wh\rho_i - \rho_i )^2}{\bar \rho_k}$ which is smaller than $4\frac{\sum_{i\in \mc J_k} \rho_i ^2}{\bar \rho_k}$, and by Proposition~\ref{prop:small-spillover} smaller than $4e^{-e^{k-2}-k-1}$. Take the sum from $k_0$ to $\log(N)$,$\lt \| D^{-1}(\wh\rho - \rho )\rt\|=O(\sqrt{\frac{M}{N})+e^{-e^{k_0-2}-k_0}})=O(\sqrt{\frac{M}{N}}+e^{\frac{1}{2}(-e^{k_0-2}-k_0)})$. 
\end{enumerate}
To conclude, we have shown that with probability at least $3/4$, $\|P_V D^{-1} ((B-\wh \rho \wh\rho^\top)- (\mbb B -\rho \rho^\top)) D^{-1} P_V\|\le \|P_V D^{-1} ((B-\wh \rho \wh\rho^\top)- (\mbb B - \wh \rho \wh \rho^\top)) D^{-1} P_V\|+\|P_V D^{-1} ( \rho \rho^\top - \wh \rho \wh \rho^\top) D^{-1} P_V\|=O(\frac{\log^2(N) }{\sqrt{w_{min}N}}+\sqrt{\frac{M}{N}}+e^{\frac{1}{2}(-e^{k_0-2}-k_0)})$

\end{proof}

\begin{proposition}[Projection Preserves Signal]\label{prop:2shmm_subspace}
With high probability, $\|P_V D^{-1} (\mbb B-\rho \rho^\top) D^{-1} P_V - D^{-1} (\mbb B-\rho \rho^T) D^{-1}\| \le O((\frac{1}{e^{k_0} w^2_{min}})^{1/4})$
\end{proposition}
\begin{proof}
The proof is similar to the proof of Proposition~\ref{prop:subspace} except $\delta$ plays the role of $\mbb B^{sqrt}$ and we need Corollary~\ref{cor:2shmm_diag-block-concen}.
\end{proof}

\begin{corollary}\label{cor:2shmm_estimator}
Let $\wh B'$ be the rank $1$ truncated SVD of matrix $P_V D^{-1} (B-\wh\rho \wh\rho^\top) D^{-1} P_V$. With probability larger than $\frac{3}{4}$, $\|\wh B' - D^{-1}(\mbb B-\rho\rho^\top) D^{-1}\| =O(\frac{\log^2(N) }{\sqrt{w_{min}N}} + \sqrt{\frac{M}{N}}+e^{\frac{1}{2}(-e^{k_0-2}-k_0)}+(\frac{1}{e^{k_0} w^2_{min}})^{1/4})$
\end{corollary}
\begin{proof}
Follows from the proof of Corollary~\ref{cor:estimator}. Simply combine Proposition~\ref{prop:2shmm_noisefilter} and Proposition~\ref{prop:2shmm_subspace}.
\end{proof}

\begin{proposition}\label{prop:2shmm_delta}
Define $\wh \delta'=\sqrt{\sigma}u$ where $u$ is the left singular vector of $\wh B'$ and $\sigma$ is the singular value. Then $\wh B' = \wh\delta' \wh\delta'^\top$ or $ \wh B' = -\wh\delta' \wh\delta'^\top$ holds. With large probability,  
$$
\min \{\| D^{-1}\delta - \wh \delta' \|, \| D^{-1}\delta + \wh \delta' \|\} \le O(\sqrt{\frac{\log^2(N) }{\sqrt{w_{min}N}} + \sqrt{\frac{M}{N}}+e^{\frac{1}{2}(-e^{k_0-2}-k_0)}+(\frac{1}{e^{k_0} w^2_{min}})^{1/4}})
$$
\end{proposition}
\begin{proof}
Straightforward proof by applying Lemma~\ref{lem:wedin-rank-1} with $\wh \delta'$ and $D^{-1}\delta$.
\end{proof}

\begin{theorem}[Main Theorem]
Let $\wh \delta'$ be the vector defined in Propositon~\ref{prop:2shmm_delta}(flip the sign of $\wh \delta'$ such that $\|D^{-1} \wh \delta' - \delta\|$ achieves the min) and $\wh \delta = D \wh \delta'$. With probability at least $2/3$:
$$
    min\{\|\wh \delta-\delta\|_{\ell_1}, \|\wh \delta-\delta\|_{\ell_1}\}\le \epsilon.
 $$
\end{theorem}
\begin{proof}
Apply Cauchy-Schwatz to the $\ell_1$ norm: 
\begin{align*}
 \|\wh \delta-\delta\|_{\ell_1} &= \sum_{i:\bar\rho_{k(i)}\ne 0} |\wh \delta_i-\delta_i| \frac{1}{\sqrt{\bar \rho_{k(i)}}} \sqrt{\bar \rho_{k(i)}} +  \sum_{i:\bar\rho_{k(i)}=0} |\delta_i|\\
\le & \sqrt{\sum_{i:\bar\rho_{k(i)}\ne 0} \frac{(\wh \delta_i-\delta_i)^2}{\bar \rho_{k(i)}}} \sqrt{\sum_i \bar\rho_{k(i)}} + \sum_{i:\bar\rho_{k(i)}=0} |\delta_i|
\end{align*},
where $k(i)$ is the bucket that contains word $i$. $\sqrt{\sum_{i:\bar\rho_{k(i)}\ne 0} \frac{(\wh \delta_i-\delta_i)^2}{\bar \rho_{k(i)}}}$ is equal to the $\ell_2$ norm of vector $D^{-1} \delta-\wh \delta'$. Each $\bar \rho_{k(i)}$ is at most $e$ times the empirical marginal $\wh{\rho_i}$ and hence $\sum_i \bar \rho_{k(i)}$ is at most $e$, which gives us a $e$ upper bound for $\sqrt{\sum_i \bar\rho_{k(i)}}$. In terms of $\sum_{i:\bar\rho_{k(i)}=0} |\delta_i|$, reader can refer to the proof of Theorem~\ref{thm:main_tech} for the bound that $
\sum_{i:\bar\rho_{k(i)}=0} |\delta_i|\le 22e^{-\frac{k_0+1}{2}} + \frac{Me^{k_0}}{N} + e^{-e^{k_0-2}}.
$
Put everything together:
$$
\|\wh \delta-\delta\|_{\ell_1} = C((\frac{\log^2(N) }{\sqrt{w_{min}N}})^\frac{1}{2} + (\frac{M}{N})^\frac{1}{4}+e^{\frac{1}{4}(-e^{k_0-2}-k_0)}+(\frac{1}{e^{k_0} w^2_{min}})^{1/8}) + 22e^{-\frac{k_0+1}{2}} + \frac{Me^{k_0}}{N} + e^{-e^{k_0-2}}.
$$
Let $k_0\le 8\log(\frac{C\sqrt{R}}{\epsilon \sqrt{w_{min}}})+32$ and given $N=O(\frac{M}{w_{min}^4\eps^9})$ the right hand side of the above is bounded by $\eps$, as desired.
\end{proof}

\end{document}